\DeclareMathOperator*{\argmax}{arg\,max}
\begin{document}
%
\title{Geographical Hidden Markov Tree for Flood Extent Mapping}
\date{}
%
%
%
%

\author{Miao Xie, Zhe Jiang\thanks{Contacting Author: Zhe Jiang, zjiang@cs.ua.ede; Xie Miao, Zhe Jiang, and Arpan Man Sainju are all from the Department of Computer Science, University of Alabama, Tuscaloosa}, Arpan Man Sainju}

\maketitle
\begin{abstract}
Flood extent mapping plays a crucial role in disaster management and national water forecasting. Unfortunately, traditional classification methods are often hampered by the existence of noise, obstacles and heterogeneity in spectral features as well as implicit anisotropic spatial dependency across class labels. 
In this paper, we propose geographical hidden Markov tree, a probabilistic graphical model that generalizes the common hidden Markov model from a one dimensional sequence to a two dimensional map. Anisotropic spatial dependency is incorporated in the hidden class layer with a reverse tree structure. We also investigate computational algorithms for reverse tree construction, model parameter learning and class inference. Extensive evaluations on both synthetic and real world datasets show that proposed model outperforms multiple baselines in flood mapping, and our algorithms are scalable on large data sizes.
\end{abstract}



%

\section{Introduction}\label{sec:intro}

Flood extent mapping plays a crucial role in addressing grand societal challenges such as disaster management,  national water forecasting, as well as energy and food security. For example, during Hurricane Harvey floods in 2017, first responders needed to know where flood water was in order to plan rescue efforts. In national water forecasting, detailed flood extent maps can be used to calibrate and validate the NOAA National Water Model~\cite{nwm}, which can forecast the flow of over 2.7 million rivers and streams through the entire continental U.S.~\cite{iwrss}. 

In current practice, flood extent maps are mostly generated by flood forecasting models, whose accuracy is often unsatisfactory in high spatial details~\cite{iwrss}. Other ways to generate flood maps involve sending field crew on the ground to record high-water marks, or visually interpreting earth observation imagery~\cite{brivio2002integration}. However, the process is both expensive and time consuming. With the large amount of high-resolution earth imagery being collected from satellites (e.g., DigitalGlobe, Planet Labs), aerial planes (e.g., NOAA National Geodetic Survey), and unmanned aerial vehicles, the costs of manually labeling flood extents become prohibitive.

The focus of this paper is to develop a classification model that can automatically classify earth observation imagery pixels into flood extent maps. The results can be used by first responders to plan rescue efforts, by hydrologists to calibrate and validate water forecasting models, as well as by insurance companies to process claims. Specifically, we can utilize a small set of manually collected ground truth (flood and dry locations) in one earth imagery to learn a classification model. Then the model can be used to classify flood pixels in other imagery where ground truth is not available.

However, flood mapping poses several unique challenges that are not well addressed in traditional classification problems. First, data contains rich noise and obstacles. For example, high-resolution earth imagery often has noise, clouds and shadows. The spectral features of these pixels cannot be used to distinguish classes. Second, class confusion exists due to heterogeneous features. For instance, pixels of tree canopies overlaying flood water have the same spectral features with those trees in dry areas, yet their classes are different. Third, implicit directed spatial dependency exists between flood class locations. Specifically, due to gravity, flood water tends to flow to nearby lower locations following topography. Such dependency is not uniform in all directions (anisotropic). Finally, the data volume is huge in high-resolution imagery (e.g., hundreds of millions of pixels in one city), requiring scalable algorithms.

To address these challenges, we propose a novel spatial classification model called \emph{geographical hidden Markov tree (HMT)}. It is a probablistic graphical model that generalizes the common hidden Markov model (HMM) from a one-dimensional sequence to a two dimensional geographical map. Specifically, the hidden class layer contains nodes (pixels) in a reverse tree structure to represent anisotropic spatial dependency with a partial order constraint. Each hidden class node has an associated observed feature node for the same pixel. Such a unique model structure can potentially reduce classification errors due to noise, obstacles, and heterogeneity among spectral features of individual pixels. 

We further investigate computational algorithms for reverse tree construction, model parameter learning, and class inference. Specifically, reverse tree is constructed following topological orders based on elevations. In order to learn model parameters given a hidden class layer, we utilize the EM algorithm with message propagation along the reverse tree. Finally, for class inference, we design a greedy algorithm that assign class labels for tree nodes to maximize overall probability following the partial order constraint.

In summary, we make the following contributions:
\begin{itemize}
    \item We propose a novel geographical hidden Markov tree (HMT) model that incorporates partial order class dependency in a reverse tree structure in a hidden class layer. Unlike existing hidden Markov trees~\cite{crouse1998wavelet} which model dependency in two-dimensional time-frequency domain for signal processing, our geographical HMT captures anisotropic (directed) spatial dependency with a partial order constraint.
    \item We design efficient algorithms for reverse tree construction, model parameter learning and class inference.
    \item We conduct theoretical analysis on the correctness and time complexity of HMT algorithms.
    \item We evaluate proposed model in both synthetic and real world datasets for flood mapping. Results show that proposed model outperforms multiple baseline methods in flood mapping, and our algorithms are scalable for large data sizes.
\end{itemize}


\section{Problem Statement}\label{sec:prob}
\subsection{Preliminaries}

A \emph{spatial raster framework} is a tessellation of a two dimensional plane into a regular grid of $N$ cells. Spatial neighborhood relationship exists between cells based on cell adjacency. The framework can consist of $m$ non-spatial explanatory feature layers (e.g., spectral bands in earth imagery), one spatial contextual layer (e.g., elevation), and one class layer (e.g., \emph{flood}, \emph{dry}).

Each cell in a raster framework is a \emph{spatial data sample}, noted as $\mathbf{s_n}=(\mathbf{x}_n, \phi_n, y_n)$, where $n\in \mathbb{N}, 1\leq n \leq N$, $\mathbf{x}_n\in \mathbb{R}^{m\times 1}$ is a vector of non-spatial explanatory feature values with each element corresponding to one feature layer, $\phi_n\in\mathbb{R}$ is a cell's value in the spatial contextual layer, and $y_n\in \{0,1\}$ is a binary class label.

A raster framework with all samples is noted as $\mathcal{F}=\{\mathbf{s_n}|n\in \mathbb{N}, 1\leq n \leq N\}$, non-spatial explanatory features of all samples  are noted as $\mathbf{X}=[\mathbf{x}_1,...,\mathbf{x}_N]^T$, the spatial contextual layer is noted as $\boldsymbol{\Phi}=[\phi_1,...,\phi_N]^T$, and the class layer is noted as $\mathbf{Y}=[y_1,...,y_N]^T$.

Due to physics, spatial dependency exists between cells based on their values in the spatial contextual layer. Such dependency is often non-uniform in different directions (\emph{anisotropic}). For example, due to gravity, flood water can only flow to neighboring cells with lower elevation values.

Anisotropic dependency often follows a \emph{partial order constraint}. Formally, assuming the spatial contextual layer is a potential field (e.g., elevation), a partial order dependency $\mathbf{s}_i\leadsto\mathbf{s}_j$ exists if and only if there exist a sequence of neighboring (adjacent) cells $<\mathbf{s}_i,\mathbf{s}_{p_1},\mathbf{s}_{p_2},...,\mathbf{s}_{p_l},\mathbf{s}_j>$ such that $\phi_j\geq\phi_i$ and $\phi_j\geq\phi_{p_k}$ for any $1\leq k \leq l$.

Figure~\ref{fig:tree}(a) shows an illustrative example with eight spatially adjacent cell samples in one dimensional space. Due to gravity, if cell $\mathbf{s}_5$ is \emph{flood}, its nearby cells with lower elevations including $\mathbf{s}_2,\mathbf{s}_3,\mathbf{s}_4,\mathbf{s}_6,\mathbf{s}_7$ should also be \emph{flood}, even if their feature values indicate otherwise. Thus, we can establish partial order spatial dependency between cell locations such as $\mathbf{s}_4\leadsto\mathbf{s}_5$, $\mathbf{s}_2\leadsto\mathbf{s}_5$.

\begin{figure}
    \centering
    \subfloat[Eight consecutive sample locations in one dimensional space]{\includegraphics[width=1.5in]{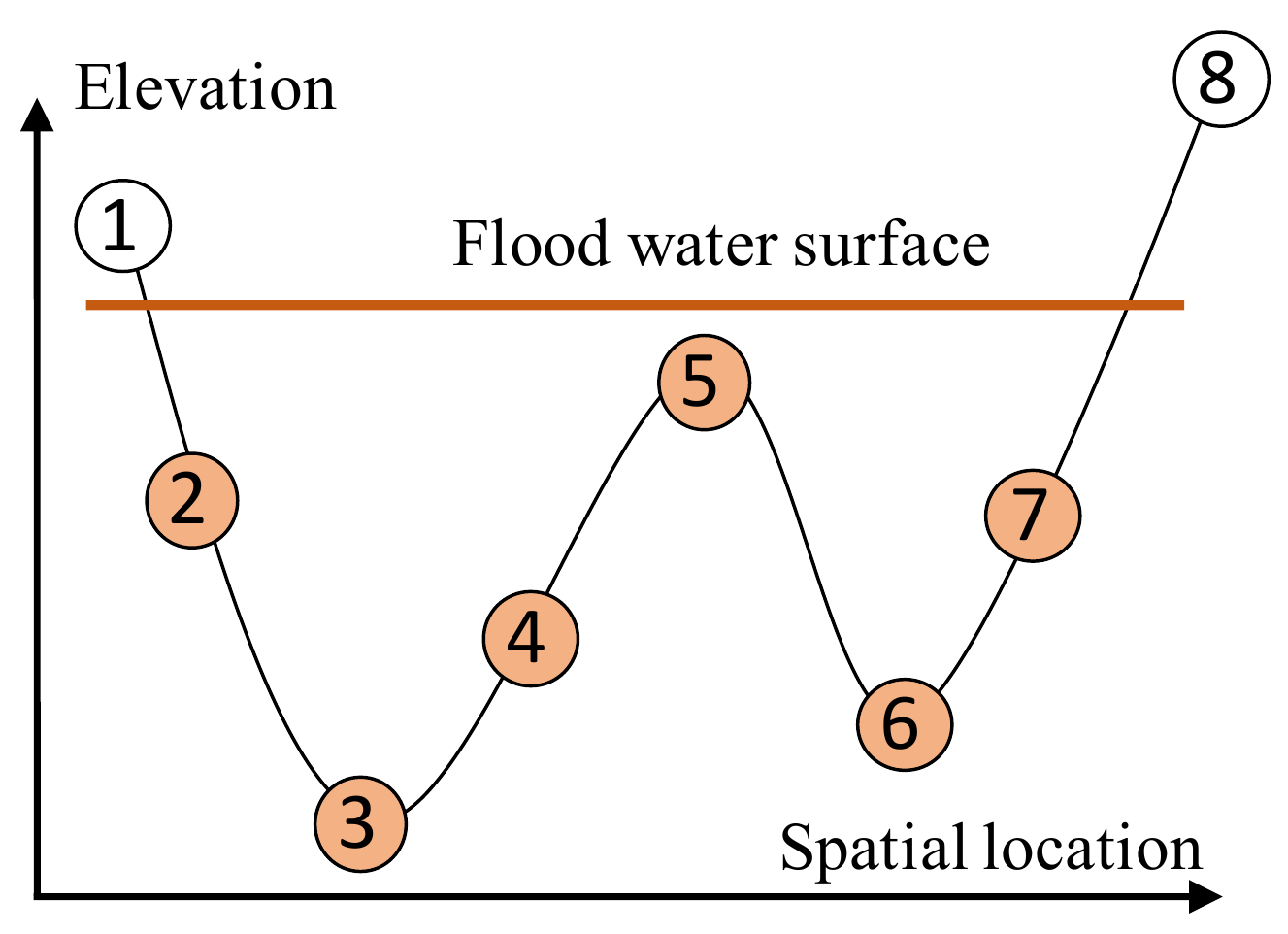}}\hspace{1mm}
    \subfloat[Partial order constraint in a reverse tree]{\includegraphics[width=1in]{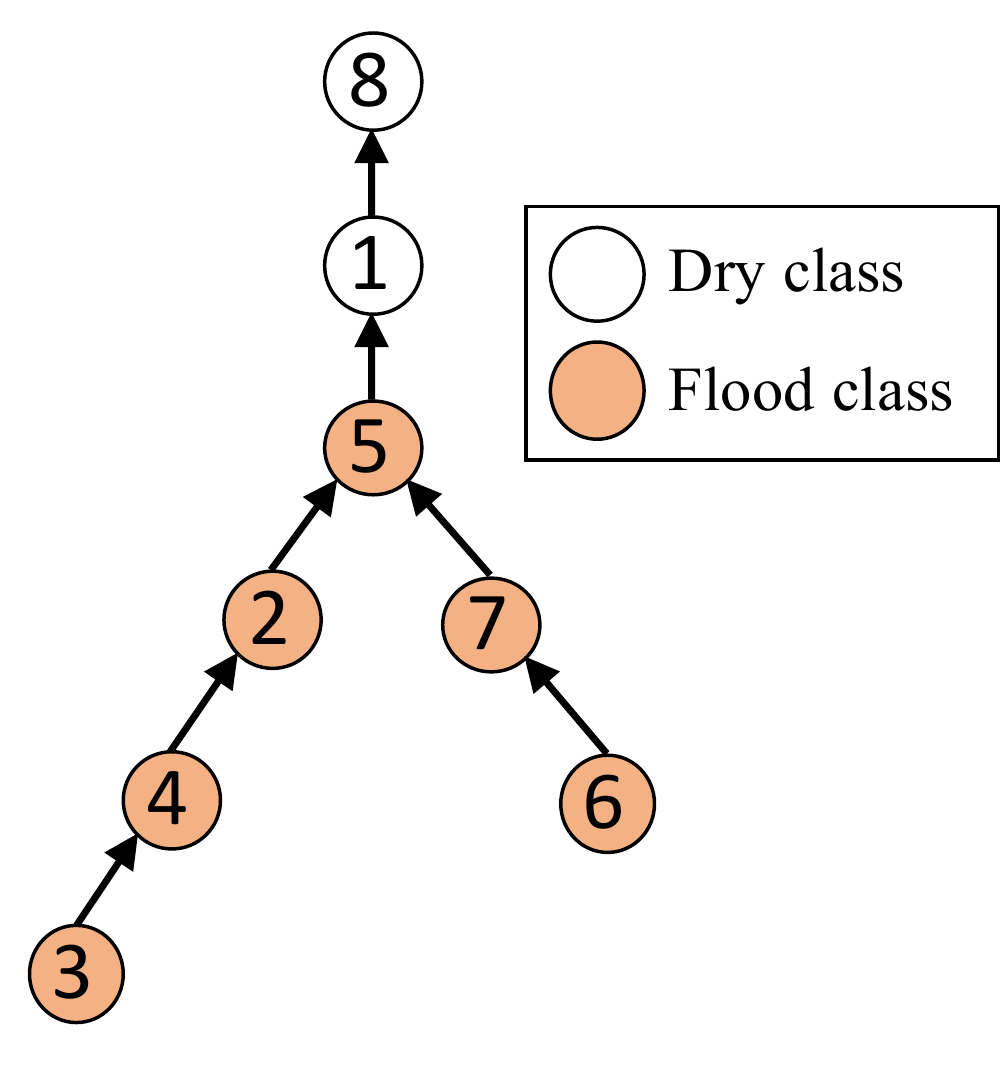}}\hspace{1mm}
    \caption{Illustration of partial order class dependency}
    \label{fig:tree}
\end{figure}

Partial order dependency across all pairs of samples in a raster framework can be represented by a reverse tree structure, which is called \emph{spatial dependency (reverse) tree} or \emph{dependency tree}. We sometimes omit the word ``reverse" for simplicity. The tree structure removed some redundant dependency between cell locations. Due to the reverse nature, a tree node $n$ can have at most one child  $C_n\in \mathbb{N}$, but multiple parents $\mathcal{P}_n=\{k\in \mathbb{N}|\mathbf{s}_k\rightarrow\mathbf{s}_n\}$ and multiple siblings $\mathcal{S}_n=\{k\in \mathbb{N}|\exists~c\in \mathbb{N}~s.t.~ \mathbf{s}_k\rightarrow\mathbf{s}_c, \mathbf{s}_n\rightarrow\mathbf{s}_c\}$, where $\rightarrow$ represents a tree edge from a parent to a child.  

Figure~\ref{fig:tree}(b) shows an example of dependency tree corresponding to samples in Figure~\ref{fig:tree}(a). Class dependency $\mathbf{s}_3\leadsto\mathbf{s}_2$ is redundant given dependency $\mathbf{s}_3\rightarrow\mathbf{s}_4$ and $\mathbf{s}_4\rightarrow\mathbf{s}_2$. It is worth noting that we assume an arbitrary order when comparing nodes with the same elevation values. For instance, if node $\mathbf{s}_1$ and node $\mathbf{s}_8$ had the same elevation, the top of the tree could be either $\mathbf{s}_5\rightarrow\mathbf{s}_1\rightarrow\mathbf{s}_8$ or $\mathbf{s}_5\rightarrow\mathbf{s}_8\rightarrow\mathbf{s}_1$.


\subsection{Formal problem definition}
We now formally define the problem.

{\noindent}{\bf Input:}\\
$\bullet$ Spatial raster framework $\mathcal{F}=\{\mathbf{s}_n|n\in \mathbb{N}, 1\leq n \leq N\}$\\
$\bullet$ Explanatory features of samples $\mathbf{X}=[\mathbf{x}_1,...,\mathbf{x}_N]^T$\\
$\bullet$ Spatial contextual layer (elevation) of samples: $\boldsymbol{\Phi}=[\phi_1,...,\phi_N]^T$\\
$\bullet$ Training samples $\{\mathbf{s}_k|k\in training~set\}$ \\
{\bf Output:} A spatial classification model $f:\mathbf{Y}=f(\mathbf{X})$\\
{\bf Objective:} minimize classification errors\\
{\bf Constraint:}\\
$\bullet$ Explanatory feature layers contain noise and obstacles\\
$\bullet$ Partial order dependency exists between sample classes based on spatial contextual layer\\
$\bullet$ Sample class is binary, $y_n\in\{0,1\}$

\section{Proposed Approach}\label{sec:app}
In this section, we start with overview of our hidden Markov tree model and its probabilistic formulation. We then introduce specific algorithms for dependency tree construction, model parameter learning and class inference.

\subsection{Overview of Hidden Markov Tree}
We propose a hidden Markov tree (HMT) model, which generalizes the common hidden Markov model from a total order chain structure to a partial order (reverse) tree structure. As illustrated in Figure~\ref{fig:hmtrt}, a HMT model consists of two layers: a hidden layer of sample classes (e.g., flood, dry), and an observation layer of sample feature vectors (e.g., spectral vectors). Each node corresponds to a spatial data sample (raster cell). Edge directions show probabilistic conditional dependence structure. Specifically, the model assumes that feature vectors of different samples are conditionally independent with each other given their classes, and sample classes follow a partial order dependency in a reverse tree structure.

\begin{figure}[h]
    \centering
    \includegraphics[width=1.5in]{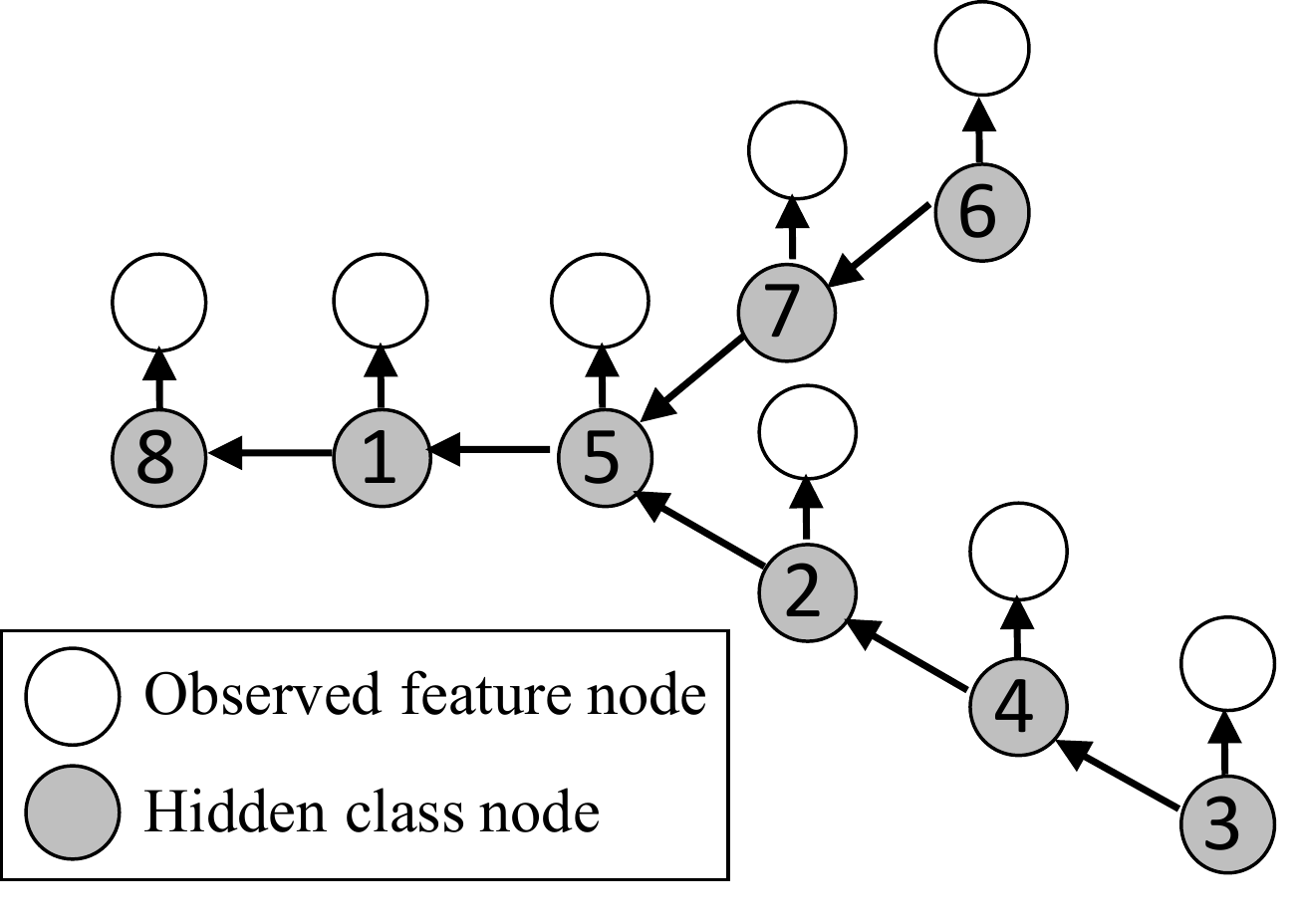}
    \caption{Illustration of hidden Markov tree framework}
    \label{fig:hmtrt}
\end{figure}
Hidden Markov tree is a probabilistic graphic model. The joint distribution of all samples' features and classes can be expressed as Equation~\ref{eq:joint}, where ${\mathcal{P}_n}$ is the set of parent samples of the $n$th sample in the dependency tree, and $y_{k\in\mathcal{P}_n}\equiv\{y_k|k\in\mathcal{P}_n\}$ is the set of class nodes corresponding to parents of the $n$th sample. For a leaf node $n$, ${\mathcal{P}_n}=\emptyset$, and $P(y_n|y_{k\in\mathcal{P}_n})=P(y_n)$.
\begin{equation}\label{eq:joint}
    P(\mathbf{X},\mathbf{Y})= P(\mathbf{X}|\mathbf{Y})P(\mathbf{Y}) = \prod_{n=1}^N P(\mathbf{x}_n|y_n) \prod_{n=1}^NP(y_n|y_{k\in\mathcal{P}_n})
\end{equation}

The conditional probability of sample feature vector given its class can be assumed i.i.d. Gaussian for simplicity, as shown in Equation~\ref{eq:featureclassprob}, where $\boldsymbol{\mu}_{y_n}$ and $\boldsymbol{\Sigma}_{y_n}$ are the mean and covariance matrix of feature vector $\mathbf{x}_n$ for class $y_n$ ($y_n=0,1$). It is worth noting that $P(\mathbf{x}_n|y_n)$ could be more general than i.i.d. Gaussian.
\begin{equation}\label{eq:featureclassprob}
    P(\mathbf{x}_n|y_n)\sim \mathcal{N}(\boldsymbol{\mu}_{y_n},\boldsymbol{\Sigma}_{y_n})
\end{equation}

Class transitional probability follows the partial order constraint. For example, due to gravity, if any parent's class is \emph{dry}, the child's class must be \emph{dry}; if all parents' classes are \emph{flood}, then the child has a high probability of being \emph{flood}. Consider \emph{flood} as the positive class (class value $1$) and \emph{dry} as the negative class (class value $0$), the transitional probability is actually conditioned on the product of parent classes $y_{\mathcal{P}_n}\equiv\prod_{k\in\mathcal{P}_n}y_k$. The formula is in Equation~\ref{eq:clscondition}, where $\rho$ is the probability of a child in class $1$ given all parents in class $1$ (note that we assume $0^0\equiv1$). In other words, if any parent is in class $0$ ($y_{\mathcal{P}_n}=0$), the current node must also be in class $0$ ($y_n=0$); if all parents are in class $1$ ($y_{\mathcal{P}_n}=1$), then the current node has a probability of $\rho$ being in class $1$.
\begin{equation}\label{eq:clscondition}
    \small
    P(y_n|y_{\mathcal{P}_n})=1^{(1-y_n)(1-y_{\mathcal{P}_n})}\times 0^{y_n(1-y_{\mathcal{P}_n})} \times \rho^{y_ny_{\mathcal{P}_n}} \times (1-\rho)^{(1-y_n)y_{\mathcal{P}_n}}
\end{equation}

For a leaf node $n$, ${\mathcal{P}_n}=\emptyset$. The transitional probability is degraded into simple class probability $P(y_n|y_{k\in\mathcal{P}_n})\equiv P(y_n)=\pi^{y_n}\times(1-\pi)^{1-y_n}$, where $\pi$ is the probability of $y_n$ being in class $1$.

Though we introduce our HMT in the context of flood mapping, the model can potentially be used for a broad class of spatial classification problems in which class labels follow a partial order dependency. Examples include predicting pollutants in river stream networks and traffic congestion in road networks. 

\subsection{Dependency Tree Construction}
Given geopotential field values (e.g., elevation) of all cells in a raster framework, the goal is to produce a partial order class dependency tree, in which each node corresponds to the class label of a cell. The process is computationally challenging due to the large number of cells (tree nodes) in real world high-resolution earth imagery (e.g., hundreds of millions of pixels). 
\begin{algorithm}
\caption{{Spatial Dependency Tree Construction}}
\label{alg:treebaseline}
\begin{algorithmic}[1]
\REQUIRE\quad\\
$\bullet$ A raster framework of samples: $\mathcal{F}=\{\mathbf{s_n}|n\in \mathbb{N}, 1\leq n \leq N\}$\\
$\bullet$ A spatial contextual layer of samples: $\boldsymbol{\Phi}=[\phi_1,...,\phi_N]^T$\\
\ENSURE\quad\\
$\bullet$ A spatial dependency tree
\STATE Initialize all samples as \emph{unvisited}
\STATE Sort all samples by increasing $\phi$ values 
\FORALL {sample $\mathbf{s_n}$ in an ascending order of $\phi$}
    \STATE Mark $\mathbf{s_n}$ as \emph{visited}
    \STATE Create a new tree node of $\mathbf{s_n}$
    \IF {there exists \emph{unvisited} neighbor of $\mathbf{s_n}$}
        \FORALL {\emph{unvisited} neighbor $\mathbf{s_k}$ of $\mathbf{s_n}$}
            \STATE Traverse from node $\mathbf{s_k}$ to the rear of its tree branch
            \STATE Attach node $\mathbf{s_n}$ to the rear if not have done so
        \ENDFOR
    \ELSE
        \STATE Create a tree branch starting from node $\mathbf{s_n}$ as a leaf
    \ENDIF
\ENDFOR
\STATE {\bf return} the root node of dependency tree
\end{algorithmic}
\end{algorithm}

To address the challenge, we propose an algorithm that constructs the tree by adding nodes in topological order. Details are in Algorithm~\ref{alg:treebaseline}. The algorithm starts with an empty tree and an empty set of \emph{visited} cells (all cells are \emph{unvisited}, step 1). It sorts all cells by their geopotential field (elevation) values (step 2). After this, \emph{unvisited} cells are added into the tree (i.e., become visited) one by one following an ascending order of geopotential. Specifically, for each cell, the algorithm first marks it as \emph{visited} (step 4), creates a tree node for the cell (step 5), and attach the tree node to the rear of the tree branch following every \emph{visited} neighbor of the the cell (steps 6 to 9). If no neighbor of the cell is \emph{visited}, the cell is a local minimum in geopotential field, and the algorithm creates a new tree branch starting from the node of the cell (steps 10 to 11). 

We now use the example of Figure~\ref{fig:tree} to illustrate the algorithm execution trace. The example contains cells in one dimensional space, but generalization to the case of two dimensional space is trivial. The input contains eight cells from $\mathbf{s}_1$ to $\mathbf{s}_8$. The algorithm first sorts these cells by ascending order of elevation, and gets a sequence $\mathbf{s}_3,\mathbf{s}_6,\mathbf{s}_4,\mathbf{s}_7,\mathbf{s}_2,\mathbf{s}_5,\mathbf{s}_1,\mathbf{s}_8$. Then, leaf nodes are created for $\mathbf{s}_3$ and $\mathbf{s}_6$, since none of their neighbors are visited by then. Next, when adding $\mathbf{s}_4$, its neighbor $\mathbf{s}_3$ is \emph{visited}, so the algorithm adds node $\mathbf{s}_4$ to the rear of the branch following $\mathbf{s}_3$. Similarly, node $\mathbf{s}_7$ and $\mathbf{s}_2$ are attached to the two branches respectively. When adding the node for $\mathbf{s}_5$, both of its neighbors are \emph{visited}, so $\mathbf{s}_5$ is attached to the rear of both branches. After this, nodes $\mathbf{s}_1$ and $\mathbf{s}_8$ are added consecutively.

Time complexity analysis: Algorithm~\ref{alg:treebaseline} involves a one-time sorting of $N$ cells, which is $O(N\log N)$. Then, for each of the $N$ cells, the main operation is to attach the cell to the rear of the branches of its visited neighbors. A naive implementation will cost $O(N)$, making the total cost $O(N^2)$. A smarter way to do this is to maintain a rear node pointer for each branch when it is created (i.e., when a leaf node is added). Assuming that geopotential field values on neighboring cells are contiguous (this is often true since real world elevation of nearby locations do not change suddenly), finding the rear of a neighboring cell's branch is within a constant cost, making the total time cost $O(N\log N + N)=O(N\log N)$ (cost after sorting is linear).

\begin{figure}
\centering
\subfloat[From leaves to root]{%
      \includegraphics[width=1.5in]{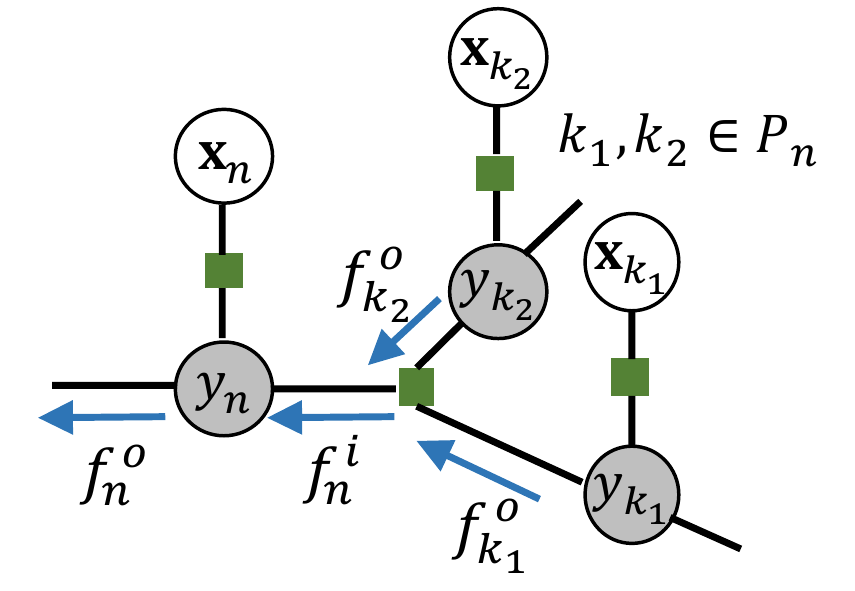}
}
\subfloat[From root to leaves]{%
      \includegraphics[width=1.3in]{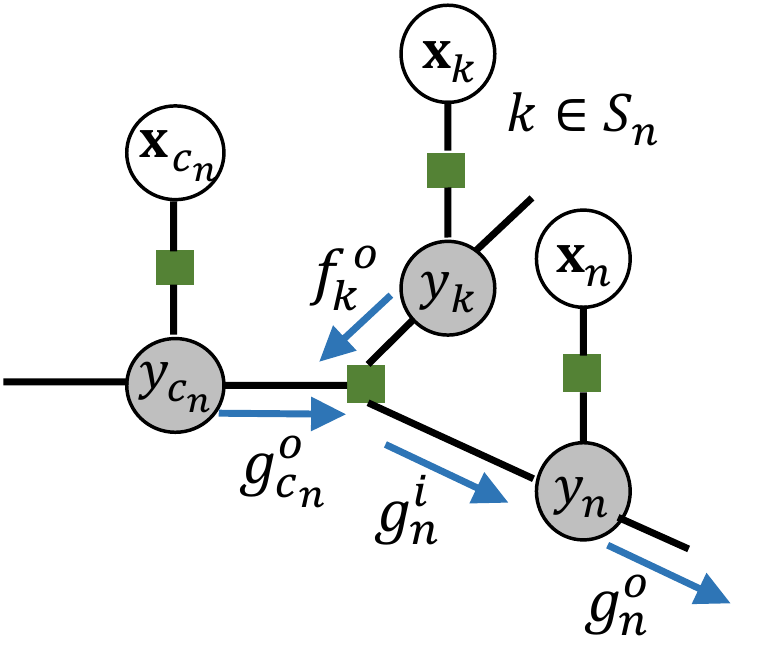}
}
\caption{Illustration of message propagation in a HMT}
\label{fig:message}
\end{figure}

\subsection{Model Parameter Learning}
The parameters of hidden Markov tree include the mean and covariance matrix of sample features in each class, prior probability of leaf node classes, and class transition probability for non-leaf nodes. We denote the entire set of parameters as $\boldsymbol{\Theta}=\{\rho, \pi, \boldsymbol{\mu}_c, \boldsymbol{\Sigma}_c|c=0,1 \}$. Learning the set of parameters poses two major challenges: first, there exist unknown hidden class variables $\mathbf{Y}=[y_1,...,y_N]^T$, which are non-i.i.d.; second, the number of samples (nodes) is huge (up to hundreds of millions of pixels). 

To address these challenges, we propose to use the expectation-maximization (EM) algorithm and message (belief) propagation. Our EM-based approach has the following major steps:
\renewcommand{\labelenumi}{(\alph{enumi})}
\begin{enumerate}
    \item Initialize parameter set $\boldsymbol{\Theta}_0$
    \item Compute posterior distribution of hidden classes:\\ $P(\mathbf{Y}|\boldsymbol{\mathbf{X},\Theta_0})$ 
    \item Compute posterior expectation of log likelihood:\\ $LL(\boldsymbol{\Theta})=\mathbb{E}_{\mathbf{Y}|\mathbf{X},\boldsymbol{\Theta_0}}\log P(\mathbf{X},\mathbf{Y}|\boldsymbol{\Theta})$
    \item Update parameters:\\
    $\boldsymbol{\Theta_0}\leftarrow \argmax_{\boldsymbol{\Theta}}LL(\boldsymbol{\Theta})$\\
    Return $\boldsymbol{\Theta_0}$ if it's converged, otherwise goto (b)
\end{enumerate}

Among the four steps above, step (b) that computes the joint posterior distribution of all sample classes is practicallly infeasible due to the large number of hidden class nodes that are non-i.i.d. Fortunately, it is not necessary to compute the entire joint posterior distribution of all sample classes  $P(\mathbf{Y}|\boldsymbol{\mathbf{X},\Theta_0})$. In fact, we only need the marginal posterior distribution of a node's and its parents' classes for non-leaf nodes, as well as the marginal posterior distribution of a node's class for leaf nodes. The reason can be explained through the expression of the posterior expectation of log likelihood in Equation~\ref{eq:postexpll}. 
\begin{equation}\label{eq:postexpll}\small
\begin{split}
LL(\boldsymbol{\Theta}) & =\mathbb{E}_{\mathbf{Y}|\mathbf{X},\boldsymbol{\Theta_0}}\log P(\mathbf{X},\mathbf{Y}|\boldsymbol{\Theta})\\
&=\mathbb{E}_{\mathbf{Y}|\mathbf{X},\boldsymbol{\Theta_0}}\log\left\{ \prod_{n=1}^N P(\mathbf{x}_n|y_n,\boldsymbol{\Theta}) \prod_{n=1}^NP(y_n|y_{k\in\mathcal{P}_n},\boldsymbol{\Theta})\right\}\\
&=\sum\limits_{\mathbf{Y}}{ P(\mathbf{Y}|\mathbf{X},\boldsymbol{\Theta_0}) 
\sum_{n=1}^{N}{\left\{\log{P(\mathbf{x}_n|y_n,\boldsymbol{\Theta})}+\log{P(y_n|y_{k\in\mathcal{P}_n},\boldsymbol{\Theta})}\right\}}}\\
& =\sum_{n=1}^{N}\log{P(\mathbf{x}_n|y_n,\boldsymbol{\Theta})}\\
&\quad\quad+\sum_{n=1}^{N}~\sum_{y_n,y_{k\in\mathcal{P}_n}}\log{P(y_n|y_{k\in\mathcal{P}_n},\boldsymbol{\Theta})}P(y_n,y_{k\in\mathcal{P}_n}|\mathbf{X},\boldsymbol{\Theta_0}) \\
\end{split}
\end{equation}
Note that for leaf node, $\mathcal{P}_n=\emptyset$, and the last term in the last line of above equation is degraded, $\log{P(y_n|y_{k\in\mathcal{P}_n},\boldsymbol{\Theta})}P(y_n,y_{k\in\mathcal{P}_n}|\mathbf{X},\boldsymbol{\Theta_0})=\log{P(y_n|\boldsymbol{\Theta})}P(y_n|\mathbf{X},\boldsymbol{\Theta_0})$.

To compute the marginal posterior distribution $P(y_n,y_{k\in\mathcal{P}_n})$ and $P(y_n)$, we propose to use the message propagation method based on the sum and product algorithm~\cite{kschischang2001factor,ronen1995parameter}. 

Figure~\ref{fig:message} illustrates the recursive message propagation process on our HMT model. 
Specifically, forward message propagation from leaves to root is based on Equation~\ref{eq:hmtforwardin} and Equation~\ref{eq:hmtforwardout}, where $f_n^i(y_n)$ and $f_n^o(y_n)$ are the incoming message into and outgoing message from a hidden class node $y_n$ respectively. 
\begin{equation}\label{eq:hmtforwardin}
    f_n^i(y_n) =
            \begin{cases}
                \quad\quad\quad\quad\quad\quad P(y_n) & \text{if } y_n \text{ is leaf}\\
                \sum\limits_{y_{k\in \mathcal{P}_n}}P(y_n|y_{k\in \mathcal{P}_n})\prod\limits_{k\in \mathcal{P}_n}f_k^o(y_k) & \text{otherwise}
            \end{cases}
\end{equation}
\begin{equation}\label{eq:hmtforwardout}
    f_n^o(y_n) = f_n^i(y_n)P(\mathbf{x}_n|y_n)
\end{equation}
Backward message propagation from root to leaves also follows a recursive process, as shown in Equation~\ref{eq:hmtbackwardin} and Equation~\ref{eq:hmtbackwardout}, where $g_n^i(y_n)$ and $g_n^o(y_n)$ are the incoming and outgoing messages for class node $y_n$ respectively. The main difference from forward propagation is that when computing incoming message $g_n^i(y_n)$, we need to multiply not only outgoing message from a child node and class transitional probability, but also outgoing messages from sibling nodes in the forward propagation (also illustrated in Figure~\ref{fig:message}(b)).
\begin{equation}\label{eq:hmtbackwardin}
g_n^i(y_n) =
    \begin{cases}
        \quad\quad\quad\quad\quad\quad1 & \text{if } y_n \text{ is root}\\
        \sum\limits_{y_{c_n},y_{k\in \mathcal{S}_n}}g_{c_n}^oP(y_{c_n}|y_n,y_{k\in \mathcal{S}_n})\prod\limits_{k\in \mathcal{S}_n}f_k^o(y_k) & \text{otherwise}
    \end{cases}    
\end{equation}
\begin{equation}\label{eq:hmtbackwardout}
    g_n^o(y_n) = g_n^i(y_n)P(\mathbf{x}_n|y_n)
\end{equation}


After both forward and backward message propagation, we can compute marginal posterior distribution of hidden class variables based on the following theorem. 
\newtheorem{theorem}{Theorem}
\begin{theorem}
The unnormalized marginal posterior distribution of the class of a leaf node, as well as the classes of a non-leaf node with parents can be computed by (9) and (10) respectively. Their normalized marginal posterior distributions can be computed by (11) and (12) respectively.
\begin{equation}\label{eq:unnormalizedMarginY}
P^\prime(y_n|\mathbf{X},\boldsymbol{\Theta_0}) = f_n^i(y_n) g_n^i(y_n) P(\mathbf{x}_n|y_n)    
\end{equation}        
\begin{equation}\label{eq:unnormalizedMarginYYp}
P^\prime(y_n, y_{k\in \mathcal{P}_n}|\mathbf{X},\boldsymbol{\Theta_0})= \prod\limits_{k\in \mathcal{P}_n}f_k^o(y_k) g_n^o(y_n)  P(y_n | y_{k\in \mathcal{P}_n})  
\end{equation}
\begin{equation}\label{eq:normalizedMarginY}
P(y_n|\mathbf{X},\boldsymbol{\Theta_0})\leftarrow \frac{P^\prime(y_n|\mathbf{X},\boldsymbol{\Theta_0})}{\sum\limits_{y_n}P^\prime(y_n|\mathbf{X},\boldsymbol{\Theta_0})}    
\end{equation}
\begin{equation}\label{eq:normalizedMarginYYp}
P(y_n, y_{k\in \mathcal{P}_n}|\mathbf{X},\boldsymbol{\Theta_0})\leftarrow\frac{P^\prime(y_n, y_{k\in \mathcal{P}_n}|\mathbf{X},\boldsymbol{\Theta_0})}{\sum\limits_{y_n,y_{k\in \mathcal{P}_n}}P^\prime(y_n, y_{k\in \mathcal{P}_n}|\mathbf{X},\boldsymbol{\Theta_0})}    
\end{equation}
\end{theorem}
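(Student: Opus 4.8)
The plan is to recognize Equations \eqref{eq:unnormalizedMarginY}--\eqref{eq:normalizedMarginYYp} as the correctness statement of the sum-product (belief propagation) algorithm specialized to the reverse-tree factor graph defined by \eqref{eq:joint}, and to prove them by establishing two ``subtree summary'' invariants, one for the forward messages and one for the backward messages, and then combining them. First I would set up notation for the partition of the tree induced by a node. For a node $n$, let $\mathcal{A}_n$ denote $n$ together with every node lying upstream of it (every node from which a directed path of tree edges reaches $n$); this is exactly the set whose factors have been absorbed into $f_n^o$ once forward propagation reaches $n$. Because the dependency graph is a reverse tree, the sets $\mathcal{A}_k$ for distinct parents $k\in\mathcal{P}_n$ are pairwise disjoint and $\mathcal{A}_n=\{n\}\cup\bigcup_{k\in\mathcal{P}_n}\mathcal{A}_k$. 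The factorization \eqref{eq:joint} assigns to each node $m$ exactly one feature factor $P(\mathbf{x}_m|y_m)$ and one transitional factor $P(y_m|y_{k\in\mathcal{P}_m})$ (a prior for leaves), and I would adopt the bookkeeping convention that $m$'s transitional factor is grouped with $\mathcal{A}_m$.

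Next I would prove the forward invariant by structural induction from the leaves toward the root: $f_n^o(y_n)$ equals the sum, over all label configurations of $\mathcal{A}_n\setminus\{n\}$, of the product of all feature and transitional factors of nodes in $\mathcal{A}_n$, viewed as a function of $y_n$. The base case is a leaf, where \eqref{eq:hmtforwardout} gives $f_n^o(y_n)=P(y_n)P(\mathbf{x}_n|y_n)$, matching the single-node subtree. For the inductive step I would substitute the hypothesis for each $f_k^o$, $k\in\mathcal{P}_n$, into the non-leaf case of \eqref{eq:hmtforwardin}, and use disjointness of the $\mathcal{A}_k$ to turn the product of sums into one sum over $\bigcup_{k\in\mathcal{P}_n}\mathcal{A}_k$; multiplying by the transitional factor $P(y_n|y_{k\in\mathcal{P}_n})$ and, via \eqref{eq:hmtforwardout}, by $P(\mathbf{x}_n|y_n)$ then completes $\mathcal{A}_n$.

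Symmetrically I would prove a backward invariant by induction from the root toward the leaves: $g_n^i(y_n)$ equals the sum, over all labels of the complement $V\setminus\mathcal{A}_n$ (where $V$ is the full node set), of the product of all their feature and transitional factors, as a function of $y_n$; note that the only downstream factor depending on $y_n$ is the transitional factor of $n$'s unique child $c_n$. The root base case gives the empty product $g_{\mathrm{root}}^i=1$. The crux of the whole argument, and the step I expect to be the main obstacle, is this inductive step, because $c_n$ couples $n$ to its siblings $\mathcal{S}_n$ through the high-degree transitional factor $P(y_{c_n}|y_n,y_{k\in\mathcal{S}_n})$. Here I would use the disjoint decomposition $V\setminus\mathcal{A}_n=(V\setminus\mathcal{A}_{c_n})\cup\{c_n\}\cup\bigcup_{k\in\mathcal{S}_n}\mathcal{A}_k$ (which follows from $\mathcal{A}_{c_n}=\{c_n\}\cup\mathcal{A}_n\cup\bigcup_{k\in\mathcal{S}_n}\mathcal{A}_k$ since $\mathcal{P}_{c_n}=\{n\}\cup\mathcal{S}_n$), collapse the block $V\setminus\mathcal{A}_{c_n}$ by the inductive hypothesis for $g_{c_n}^i$, collapse each sibling block $\mathcal{A}_k$ by the already-established forward invariant for $f_k^o(y_k)$, and keep $c_n$'s feature and transitional factors explicit, thereby recovering exactly \eqref{eq:hmtbackwardin} (after folding $P(\mathbf{x}_{c_n}|y_{c_n})$ into $g_{c_n}^o$ via \eqref{eq:hmtbackwardout}). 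Getting this bookkeeping right, that each transitional factor is counted once and that $f_k^o$ is the correct object to summarize an entire sibling subtree, is the delicate part; the rest is routine.

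Finally I would combine the invariants. Multiplying the forward and backward summaries partitions all factors of \eqref{eq:joint} into $\mathcal{A}_n$ and its complement with $y_n$ held free, so $f_n^i(y_n)g_n^i(y_n)P(\mathbf{x}_n|y_n)=f_n^o(y_n)g_n^i(y_n)$ is precisely $\sum_{\mathbf{Y}\setminus y_n}P(\mathbf{X},\mathbf{Y})$, giving \eqref{eq:unnormalizedMarginY}. For \eqref{eq:unnormalizedMarginYYp} I would instead hold both $y_n$ and $y_{k\in\mathcal{P}_n}$ fixed: summarizing each parent's subtree by $f_k^o(y_k)$, keeping $P(y_n|y_{k\in\mathcal{P}_n})$ explicit, and summarizing the downstream complement by $g_n^o(y_n)=g_n^i(y_n)P(\mathbf{x}_n|y_n)$ yields $\prod_{k\in\mathcal{P}_n}f_k^o(y_k)\,g_n^o(y_n)\,P(y_n|y_{k\in\mathcal{P}_n})$. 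Since both expressions are $P(\mathbf{X},\mathbf{Y})$ summed over disjoint sets of free labels, dividing each by its own sum gives the normalized posteriors \eqref{eq:normalizedMarginY} and \eqref{eq:normalizedMarginYYp}; moreover $\sum_{y_n}P^\prime(y_n|\mathbf{X},\boldsymbol{\Theta_0})=\sum_{y_n,y_{k\in\mathcal{P}_n}}P^\prime(y_n,y_{k\in\mathcal{P}_n}|\mathbf{X},\boldsymbol{\Theta_0})=P(\mathbf{X})$, so both normalize by the same data likelihood, which confirms the two marginals are mutually consistent.
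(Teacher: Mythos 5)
Your proof is correct, and its architecture coincides with the paper's: a leaves-to-root induction characterizing the forward messages, a root-to-leaves induction characterizing the backward messages (with sibling subtrees summarized by their forward messages --- the step you rightly flag as the crux), and a final multiplication-plus-normalization; your blocks $\mathcal{A}_n$, $\mathcal{A}_n\setminus\{n\}$ and $V\setminus\mathcal{A}_n$ are exactly the paper's $tree(n)$, $subtree(n)$ and $pre(n)$. The genuine difference lies in what the inductive invariants assert and how their inductive steps are justified. The paper's invariants are statistical: $f_n^i(y_n)=P(\mathbf{x}_{subtree(n)},y_n)$, $f_n^o(y_n)=P(\mathbf{x}_{tree(n)},y_n)$, $g_n^i(y_n)=P(\mathbf{x}_{pre(n)}\mid y_n)$, $g_n^o(y_n)=P(\mathbf{x}_{passed(n)}\mid y_n)$, and its inductions proceed by probability calculus, which silently relies on conditional-independence properties of the graphical model --- e.g.\ it replaces $P(\mathbf{x}_{tree(k)}\mid y_k)$ by $P(\mathbf{x}_{tree(k)}\mid y_n,y_{k\in\mathcal{S}_n})$, and it uses $\prod_{k\in\mathcal{S}_n}P(y_k)\,P(y_n)=P(y_n,y_{k\in\mathcal{S}_n})$ (marginal independence of a child's parents), none of which is itself proved. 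Your invariants are purely algebraic: each message is a partial sum of the factorization (1) over a block of nodes, so each inductive step needs only distributivity together with disjointness of the blocks, and the probabilistic identifications $f_n^o(y_n)\,g_n^i(y_n)=P(\mathbf{X},y_n)$ and $\prod_{k\in\mathcal{P}_n} f_k^o(y_k)\,g_n^o(y_n)\,P(y_n\mid y_{k\in\mathcal{P}_n})=P(\mathbf{X},y_n,y_{k\in\mathcal{P}_n})$ emerge only at the very end. What each approach buys: the paper's route keeps an interpretable statistical meaning for every message throughout, in the spirit of the upward-downward algorithm it builds on, whereas yours is more self-contained and arguably more rigorous, since the independence facts the paper invokes become consequences of your invariants rather than unproved inputs; your closing observation that both normalizing constants equal $P(\mathbf{X})$ matches the paper's final normalization step exactly.
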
    
\begin{proof}
Detailed proof is in the {\bf Appendix} at the end of this paper.
\end{proof}

After computation of marginal posterior distribution, we can update model parameters by maximizing the posterior expectation of log likelihood (the maximization or M step in EM). Taking the marginal posterior distributions in (11) and (12) above as well as parameters for probabilities in (2) and (3) into the posterior expectation of log likelihood in (4), we can easily get the following parameter update formulas.
\begin{equation}\label{eq:updaterho}
   \rho = \frac{\sum\limits_{n|\mathcal{P}_n\neq\emptyset}{\sum\limits_{y_n}\sum\limits_{y_{\mathcal{P}_n}}{y_{\mathcal{P}_n} y_nP(y_n, y_{\mathcal{P}_n}|\mathbf{X}, \boldsymbol{\Theta_0})}}} 
 {\sum\limits_{n|\mathcal{P}_n\neq\emptyset}{\sum\limits_{y_n}\sum\limits_{y_{\mathcal{P}_n}}{y_{\mathcal{P}_n}P(y_n, y_{\mathcal{P}_n}|\mathbf{X}, \boldsymbol{\Theta_0})}}} 
\end{equation}

\begin{equation}\label{eq:updatepi}
  \pi = \frac{\sum\limits_{n|\mathcal{P}_n=\emptyset}{\sum\limits_{y_n}{ y_n P(y_n|\mathbf{X}, \boldsymbol{\Theta_0})}}} {\sum\limits_{n|\mathcal{P}_n=\emptyset}{\sum\limits_{y_n}{P(y_n|\mathbf{X}, \boldsymbol{\Theta_0})}}}  
\end{equation}

\begin{equation}\label{eq:updatemu}
    \mu_c = \frac{\sum\limits_{n} \mathbf{x}_n P(y_n = c|\mathbf{X},\boldsymbol{\Theta_0})} {\sum\limits_{n}  P(y_n = c|\mathbf{X},\boldsymbol{\Theta_0})},c = {0, 1}
\end{equation}

\begin{equation}\label{eq:updatesigma}
    \Sigma_c = \frac{\sum\limits_{n} (\mathbf{x}_n - \boldsymbol{\mu}_c) (\mathbf{x}_n - \boldsymbol{\mu}_c)^T P(y_n = c|\mathbf{X},\boldsymbol{\Theta_0})} {\sum\limits_{n} P(y_n = c|\mathbf{X},\boldsymbol{\Theta_0})}, c = {0, 1}
\end{equation}

Algorithm~\ref{alg:learning} summarizes the entire parameter learning process. First, we initialize the set of parameters either with random values within reasonable range or with initial estimates based on training samples (e.g., the mean and covariance of features in each class). After parameters are initialized, the algorithm starts the iteration till parameters converge. In each iteration, it propagates messages first from leaves to root (steps 4-5) and then from root to leaves (steps 6-7). Marginal posterior distribution of node classes are then computed (steps 8-9). Based on this, the algorithm updates parameters (step 10). 

\begin{algorithm}
\caption{EM Algorithm for Hidden Markov Tree}
\label{alg:learning}
\begin{algorithmic}[1]
\REQUIRE\quad\\
$\bullet$ $\mathbf{X}=[\mathbf{x}_1,...,\mathbf{x}_N]^T$: {cell sample feature matrix}\\
$\bullet$ $\mathcal{T}$: {a reverse tree for spatial dependency}\\
$\bullet$ $\epsilon$: parameter convergence threshold\\
\ENSURE\quad\\
$\bullet$ $\boldsymbol{\Theta}=\{\rho, \pi, \boldsymbol{\mu}_c, \boldsymbol{\Sigma}_c|c=0,1 \}$: set of model parameters\\
\STATE Initialize $\boldsymbol{\Theta_0}$, $\boldsymbol{\Theta}$
\WHILE{$\|\boldsymbol{\Theta_0}-\boldsymbol{\Theta}\|_\infty>\epsilon$}
    \STATE $\boldsymbol{\Theta_0}\leftarrow\boldsymbol{\Theta}$
    \FORALL{$y_n$ from leaf to root} 
    \STATE Compute messages $f_n^i(y_n),f_n^o(y_n)$ by (5)-(6)
    \ENDFOR
    \FORALL{$y_n$ from root to leaf} 
    \STATE Compute messages $g_n^i(y_n),g_n^o(y_n)$ by (7)-(8)
    \ENDFOR
    \FORALL{$y_n, 1\leq n \leq N$} 
    \STATE // Compute marginal distributions:\\ 
    $\quad P(y_n|\mathbf{X},\boldsymbol{\Theta_0}), P(y_n,y_{k\in \mathcal{P}_n}|\mathbf{X},\boldsymbol{\Theta_0})$ by (9)-(12)
    \ENDFOR
    \STATE Update $\boldsymbol{\Theta}$ based on marginal distributions:\\         
    $\quad\boldsymbol{\Theta}\leftarrow \argmax\limits_{\boldsymbol{\Theta}}\mathbb{E}_{\mathbf{Y}|\mathbf{X},\boldsymbol{\Theta_0}}\log P(\mathbf{X},\mathbf{Y}|\boldsymbol{\Theta})$ by (13)-(16)
\ENDWHILE
\RETURN $\Theta$
\end{algorithmic}
\end{algorithm}

\emph{Time complexity}: The cost of Algorithm~\ref{alg:learning} mainly comes from the iterations. In each iteration, message propagation is done through tree traversal, which costs $O(N)$ ($N$ is the total number of samples or tree nodes). It can also be seen easily that marginal probability computation and parameter update both have costs of $O(N)$. Thus, the total cost is $O(N\cdot I)$, where $I$ is the number of iterations.

\emph{\bf Is the model unsupervised or semi-supervised?} From discussions above, it is possible to learn HMT parameters in an unsupervised manner without training class labels. However, this relies on strong assumptions on data distributions. Particularly, it requires samples in different classes to be somehow distinguishable merely based on their feature distribution ($P(\mathbf{x}_n|y_n)$), since class transitional probability in dependency tree only enforces a partial order constraint between class nodes. This assumption can be violated in many real world applications where different classes cannot be easily distinguished via unsupervised feature clustering. In such cases, we can utilize training samples with class labels to initialize parameters of $P(\mathbf{x}_n|y_n)$, i.e., $\{\boldsymbol{\mu}_c, \boldsymbol{\Sigma}_c|c=0,1\}$, by maximum likelihood estimation. In this way, initialized probability $P(\mathbf{x}_n|y_n)$ can better estimate class marginal distribution before iterations. This makes the model learning semi-supervised~\cite{zhu2005semi}.

\subsection{Class Inference}
After learning all model parameters, we can infer hidden class variables by maximizing the overall probability. In the traditional hidden Markov model, inference on hidden variables are done through the Viterbi algorithm~\cite{rabiner1989tutorial} based on dynamic programming. However, its computational cost is still very high for a large number of nodes (e.g., up to hundreds of millions). To address this challenge, we propose a greedy algorithm  that guarantees correctness based on the partial order class constraint. Taking the logarithm of joint probability in Equation~\ref{eq:joint}, we have the objective function below.
\begin{equation}\label{eq:logp}
\log P(\mathbf{X},\mathbf{Y})= \sum_{n=1}^N \log P(\mathbf{x}_n|y_n) + \sum_{n=1}^N\log P(y_n|y_{k\in\mathcal{P}_n})
\end{equation}

The goal of class inference is to assign a class label to each tree node such that the overall sum of log probability terms is maximized. Each term in the summation can be considered as a reward. For instance, $\log P(\mathbf{x}_n|y_n)$ is the reward for assigning class $y_n$ to node $n$ (i.e., node reward), $\log P(y_n|y_{k\in\mathcal{P}_n})$ is the reward for assigning class $y_n$ and $y_{k\in\mathcal{P}_n}$ to node $n$ and its parents respectively (i.e., edge reward). Thus, class inference in HMT becomes a \emph{node coloring problem}. Our goal is to find a node coloring to maximize the overall sum of rewards. In addition, the color must follow a partial order constraint, e.g., \emph{dry} nodes cannot follow \emph{flood} nodes, because otherwise, $P(y_n|y_{k\in\mathcal{P}_n})=0$. 
\begin{figure}[h]
\centering
\includegraphics[width=2.2in]{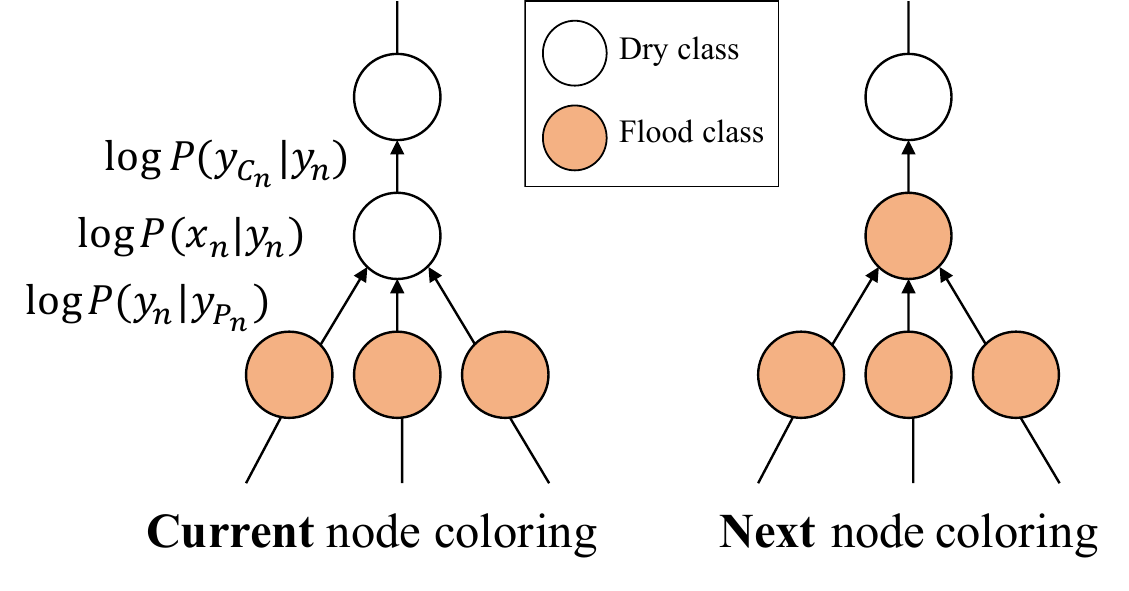}
\caption{Illustration of class inference process}
\label{fig:infer}
\end{figure}
Therefore, we can enumerate all feasible node coloring through one bottom-up tree traversal, as described in Algorithm~\ref{alg:inference}. We can initialize all node color as class $0$ (negative class, e.g., \emph{dry}), and gradually changed node colors from class $0$ to class $1$  from leaves to the root. When we change the color of a node, only the reward of the node itself, as well as the rewards of edges between the nodes to its parents and child will be updated, as illustrated in Figure~\ref{fig:infer}. Thus, we can easily compute the gain of rewards when updating node colors ($\Delta_{LL}$), and maintain the current cumulative gain ($g_{cur}(n)$) as well as the maximum cumulative gain  ($g_{max}$) that we've come across till the node so far. When we reach the root node, the maximum overall gain of rewards has been recorded. We can traverse the tree again to find its corresponding node coloring.

\begin{algorithm}
\caption{Class Inference for Hidden Markov Tree}
\label{alg:inference}
\begin{algorithmic}[1]
\REQUIRE\quad\\
$\bullet$ $\mathcal{T}$: {reverse tree for spatial dependency}\\
$\bullet$ $\boldsymbol{\Theta}=\{\rho, \pi, \theta, \boldsymbol{\mu}_c, \boldsymbol{\Sigma}_c|c=0,1 \}$: set of model parameters\\
\ENSURE\quad\\
$\bullet$ $\mathbf{Y}=[y_1,...,y_n]$: {inferred classes for all hidden nodes}
\STATE Initialize $y_n\leftarrow 0$ for $1\leq n \leq N$
\STATE Initialize $g_{cur}(n)\leftarrow 0$ for $1\leq n \leq N$
\STATE Initialize $g_{max}(n)\leftarrow 0$ for $1\leq n \leq N$
\FORALL {node $n$ in topological order from leaf to root}
    \STATE $y_n\leftarrow 1$
    \STATE 
    {\small $\Delta_{LL}\leftarrow\log\left( P(\mathbf{x_n}|y_n)P(y_{c_n}|y_n,y_{k\in\mathcal{S}_n})P(y_n|y_{k\in\mathcal{P}_n})\right) \bigg|^{y_n=1}_{y_n=0}$}\\
    // $y_{k\in\mathcal{P}_n}=\emptyset$ for leaf node $n$
    \STATE $g_{cur}(n)\leftarrow \sum\limits_{k\in\mathcal{P}_n}g_{cur}(k)+\Delta_{LL}$
    \STATE $g_{max}(n)\leftarrow \sum\limits_{k\in\mathcal{P}_n}g_{max}(k)$
    \IF{$g_{max}(n)<g_{cur}(n)$}
    \STATE $g_{max}(n)\leftarrow g_{cur}(n)$
    \ENDIF
\ENDFOR
\STATE Do breadth first tree traversal to find the frontier of maximum $g_{max}$
\STATE Set $y_n\leftarrow 0$ for nodes above the frontier
\RETURN $\mathbf{Y}=[y_1,...,y_n]$, the class labels of all nodes
\end{algorithmic}
\end{algorithm}

\emph{Time complexity analysis}: The initialization steps cost $O(N)$, where $N$ is the number of samples or tree nodes. Each iteration of the for loop has a constant cost, making the total cost $O(N)$. Similarly, the breadth first traversal and re-coloring in last step cost $O(N)$. Thus, the entire algorithm has a cost of $O(N)$.

\section{Experimental Evaluation}\label{sec:eval}
In this section, we compared our proposed method with baseline methods on both synthetic dataset and two real world datasets in classification performance. We also evaluated the computational scalability of our method on synthetic data with different sizes. Experiments were conducted on a Dell workstation with Intel(R) Xeon(R) CPU E5-2687w v4 @ 3.00GHz, 64GB main memory. Candidate classification methods for comparison include:
\begin{itemize}
    \item {\bf Non-spatial classifiers with raw features}: We tested decision tree ({\bf DT}), random forest ({\bf RF}), maximum likelihood classifier ({\bf MLC}), and gradient boosted tree ({\bf GBM}) in R packages on {\bf raw} features (including red, green, blue spectral bands respectively). 
    \item {\bf Non-spatial classifiers with preprocessed features}: We tested {\bf DT}, {\bf RF} and {\bf MLC} with additional elevation feature ({\bf elev.}) We do not include {\bf GBM} due to space limit.
    \item {\bf Non-spatial classifier with post-processing label propagation (LP):} We also tested {\bf DT}, {\bf RF} and {\bf MLC} on raw features but with post-processing on predicted classes via label propagation~\cite{zhu2002learning}. We used 4-neighborhood. We do not include {\bf GBM} due to space limit.
    \item {\bf Transductive SVM:} Since our method utilizes features of test samples, we included Transductive SVM (SVM-Light~\cite{Joachims99a}), a semi-supervised tranductive method for fair comparison.      
    \item {\bf Markov random field (MRF):} We used open source implementation~\cite{mrfsource} based on the graph cut method~\cite{szeliski2006comparative}.
    \item {\bf Hidden Markov Tree (HMT):} We implemented our HMT method in C++.
\end{itemize}
Unless specified otherwise, we used default parameters in open source tools for baseline methods.

\subsection{Synthetic Data}
We first evaluated our proposed approach on synthetic data. Specifically, we generated a regular grid with $1000$ by $1000$ pixels. Elevations and classes (flood, dry) of pixels are shown in Figure~\ref{fig:synthetic}(a-b). Feature values of pixels in two classes follow two one-dimensional Gaussian distributions with means $\mu_1=110,\mu_2=150$ and standard deviations $\sigma_1=\sigma_2=20$. To reflect the spatial autocorrelation effect, we generated one common feature value for a group of contiguous pixels in a coarse resolution ($50\times50$), as shown in Figure~\ref{fig:synthetic}(c). Training samples from two classes were generated based on the two Gaussian distributions of feature values. 
\begin{figure}[h]
\centering
\subfloat[Elevation map]{%
      \includegraphics[width=1in]{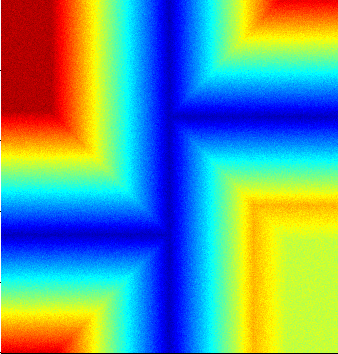}
}
\subfloat[Class map (red for flood, blue for dry)]{%
      \includegraphics[width=1in]{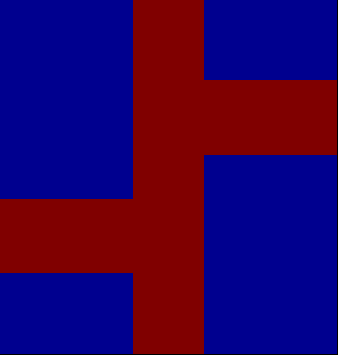}
}
\subfloat[Feature map]{%
      \includegraphics[width=1in]{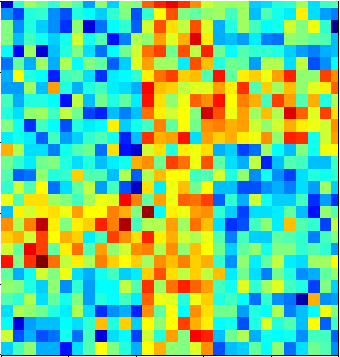}
}
\caption{Illustration of synthetic data (best viewed in color)}
\label{fig:synthetic}
\end{figure}

{\bf Computational scalability:} We measured the computational time costs of different components in our HMT algorithms on varying sizes of study area (from around $2$ million pixels to around $20$ million pixels). We also fixed the number of iterations as $3$ when running algorithms on different data sizes. The time costs were measured in the average of $10$ runs. Figure~\ref{fig:comp} shows the time costs of tree construction (Algorithm~\ref{alg:treebaseline}), parameter learning (Algorithm~\ref{alg:learning}), and class inference (Algorithm~\ref{alg:inference}) respectively. We can see that as the number of pixels increases, time costs of all algorithms are increasing. The parameter learning part takes the vast majority of time costs. Its time costs increase linearly with data sizes, because the message propagation in each iteration is done through tree traversal operations, which has a linear time complexity. Overall, our algorithms cost less than 5 minutes on a synthetic data with 20 million samples.

\begin{figure}[h]
\centering
\includegraphics[width=2in, angle=270]{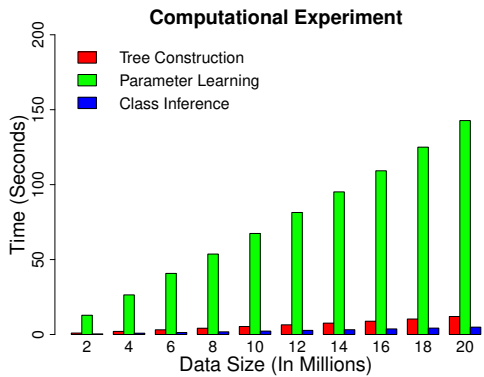}
\caption{Computational time costs of HMT on different data sizes}
\label{fig:comp}
\end{figure}

{\bf Classification performance:}
We compared the F-score of different methods on test pixels with different parameter settings of synthetic data generation. We exclude pre-processing and post-processing methods because our synthetic data generation cannot simulate the real feature textures. In the first setting, we conducted comparison on varying numbers of training pixels from $10$, $1000$, to $10000$. Results in Figure~\ref{fig:syntheticCrossMethods}(a) showed that the classification performance of different methods were relatively stable (easily reaching plateau) for different training set sizes. The reason was probably that one dimensional Gaussian distributions on feature values in two classes were very easy to learn. In the second setting, we fixed other parameters and varied the standard deviations $\sigma_1,\sigma_2$ of feature values in two classes. The higher the values were, the more confusion (Bayes error) there were between two classes. Results of different methods in Figure~\ref{fig:syntheticCrossMethods}(b) showed that as $\sigma_1,\sigma_2$ increase, the classification performance of all methods degraded, but our HMT model persistently outperformed other baseline methods, due to incorporating anisotropic spatial dependency across locations.

\begin{figure}[h]
\centering
\subfloat[]{%
      \includegraphics[width=2in,angle=270]{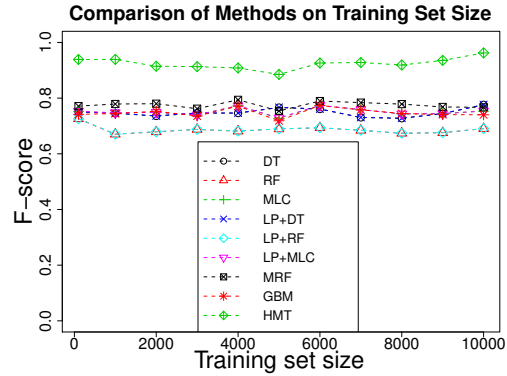}
}\\
\subfloat[]{%
      \includegraphics[width=2in,angle=270]{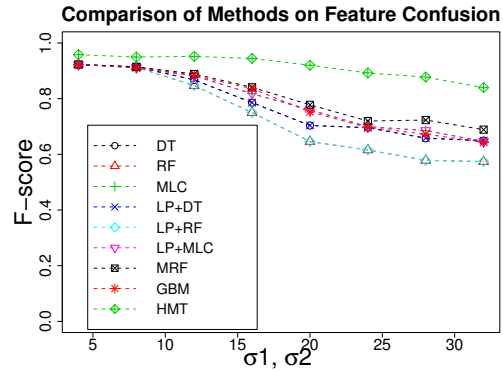}
}
\caption{Classification performance comparison across methods on synthetic data}
\label{fig:syntheticCrossMethods}
\end{figure}



\subsection{Hurricane Mathew Floods 2016}
Here we validated our approach in flood inundation extent mapping during Hurricane Mathew, NC, 2016. We used high-resolution aerial imagery from NOAA National Geodetic Survey~\cite{ngs} as explanatory features (three spectral band features including red, green, blue), and digital elevation map from the University of North Carolina Libraries~\cite{ncsudem}. All imagery data were re-sampled into a resolution of 2 meters. A test region with 1743 by 1349 pixels was used. A training set with 10000 pixels (5000 \emph{dry} and 5000 \emph{flood}) were manually labeled outside the test region, and 94608 test pixels  (47092 \emph{dry}, 47516 \emph{flood}) were labeled within the test region.

\begin{table}[h]
\centering
\caption{Comparison on Hurricane Mathew Flood data}
\begin{tabular}{cccccc}
\hline
Classifiers & Class & Precision &Recall & F & Avg. F\\ \hline
\multirow{2}{*}{DT+Raw}&Dry&{0.62}&{0.84}&{0.71}&\multirow{2}{*}{0.65}\\ 
 &Flood&{0.76}&{0.48}&{0.59}&\\ \hline
\multirow{2}{*}{RF+Raw}&Dry&{0.59}&{0.96}&{0.73}&\multirow{2}{*}{0.61}\\ 
 &Flood&{0.90}&{0.33}&{0.49}&\\ \hline
\multirow{2}{*}{GBM+Raw}&Dry&{0.69}&{0.76}&{0.72}&\multirow{2}{*}{0.71}\\ 
 &Flood&{0.74}&{0.67}&{0.70}&\\ \hline  
\multirow{2}{*}{MLC+Raw}&Dry&{0.64}&{0.93}&{0.76}&\multirow{2}{*}{0.69}\\ 
 &Flood&{0.88}&{0.48}&{0.62}&\\ \hline 
\multirow{2}{*}{DT+elev.}&Dry&{0.99}&{0.55}&{0.71}&\multirow{2}{*}{0.76}\\ 
 &Flood&{0.69}&{0.99}&{0.82}&\\ \hline 
\multirow{2}{*}{RF+elev.}&Dry&{0.99}&{0.66}&{0.79}&\multirow{2}{*}{0.82}\\ 
 &Flood&{0.74}&{0.99}&{0.85}&\\ \hline 
\multirow{2}{*}{MLC+elev.}&Dry&{0.84}&{0.90}&{0.87}&\multirow{2}{*}{0.87}\\ 
 &Flood&{0.89}&{0.84}&{0.86}&\\ \hline 
\multirow{2}{*}{DT+LP}&Dry&{0.61}&{0.92}&{0.74}&\multirow{2}{*}{0.65}\\ 
 &Flood&{0.85}&{0.43}&{0.57}&\\ \hline 
\multirow{2}{*}{RF+LP}&Dry&{0.57}&{0.99}&{0.72}&\multirow{2}{*}{0.57}\\ 
 &Flood&{0.99}&{0.26}&{0.42}&\\ \hline 
\multirow{2}{*}{MLC+LP}&Dry&{0.64}&{0.97}&{0.77}&\multirow{2}{*}{0.69}\\ 
 &Flood&{0.95}&{0.46}&{0.62}&\\ \hline  
\multirow{2}{*}{MRF}&Dry&{0.62}&{0.99}&{0.76}&\multirow{2}{*}{0.67}\\ 
 &Flood&{0.98}&{0.41}&{0.58}&\\ \hline  
\multirow{2}{*}{TSVM}&Dry&{0.62}&{0.86}&{0.72}&\multirow{2}{*}{0.66}\\ 
 &Flood&{0.78}&{0.49}&{0.60}&\\ \hline  
\multirow{2}{*}{HMT}&Dry&{0.93}&{0.99}&{0.96}&\multirow{2}{*}{0.96}\\ 
 &Flood&{0.99}&{0.93}&{0.96}&\\ \hline  
\end{tabular}
\label{tab:compFlood}
\end{table}

{\bf Classification performance comparison:} We compared different methods on their precision, recall, and F-score on both the \emph{flood} class and the \emph{dry} class. Results were summarized in Table~\ref{tab:compFlood}. We can see that decision tree, random forest, and maximum likelihood classifier all perform poorly on raw features, with overall F-score less than 0.7. Adding post-processing through label propagation slightly impaired performance. For example, after adding post-processing (LP) into decision tree, the recall of the \emph{dry} class got better but the recall of \emph{flood} class got worse, probably due to over-smoothing of correctly classified \emph{flood} pixels into the \emph{dry} class. Markov random field and Transductive SVM had comparable results with decision tree. In contrast, adding elevation features improved the overall classification performance dramatically for decision tree, random forest, and maximum likelihood classifier. The reason is because most \emph{flood} pixels have lower elevation than \emph{dry} pixels. However, the performance on dry class was still quite inferior (0.86 to 0.9 in F-score) compared with our hidden Markov tree (0.95 in F-score), probably because classification models learned based on absolute elevation values cannot perfectly apply to the test region.

Some visualization of classification results were shown in Figure~\ref{fig:real1CaseStudy}. The spectral features and elevation values were shown in Figure~\ref{fig:real1CaseStudy}(a-b). Results of decision tree were in Figure~\ref{fig:real1CaseStudy}(c), which only identified \emph{flood} pixels with open surface, and mistakenly classified the vast majority of \emph{flood} pixels below tree canopies (the spectral features of trees indicated the \emph{dry} class if not considering spatial dependency with nearby pixels). In contrast, our HMT model correctly identified most of the \emph{flood} pixels, even if the flood water was below tree canopies. The reason is that our HMT incorporates the anisotropic spatial dependency across pixel locations (if a location is \emph{flood}, its nearby lower locations should also be \emph{flood}).
\begin{figure}[h]
\centering
\subfloat[High-resolution satellite imagery in North Carolina]{%
      \includegraphics[width=1.5in]{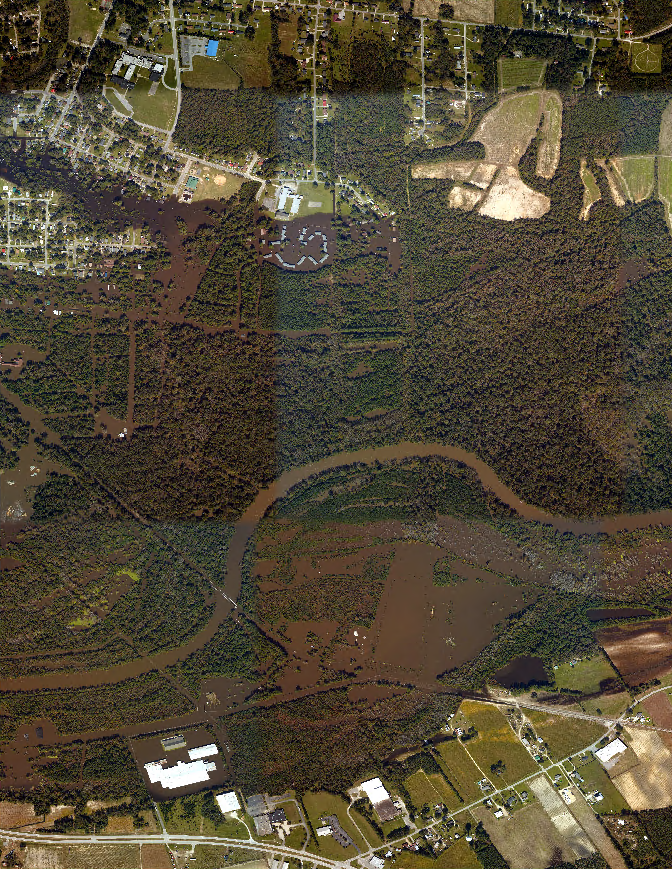}
}
\subfloat[Digital elevation]{%
      \includegraphics[width=1.5in]{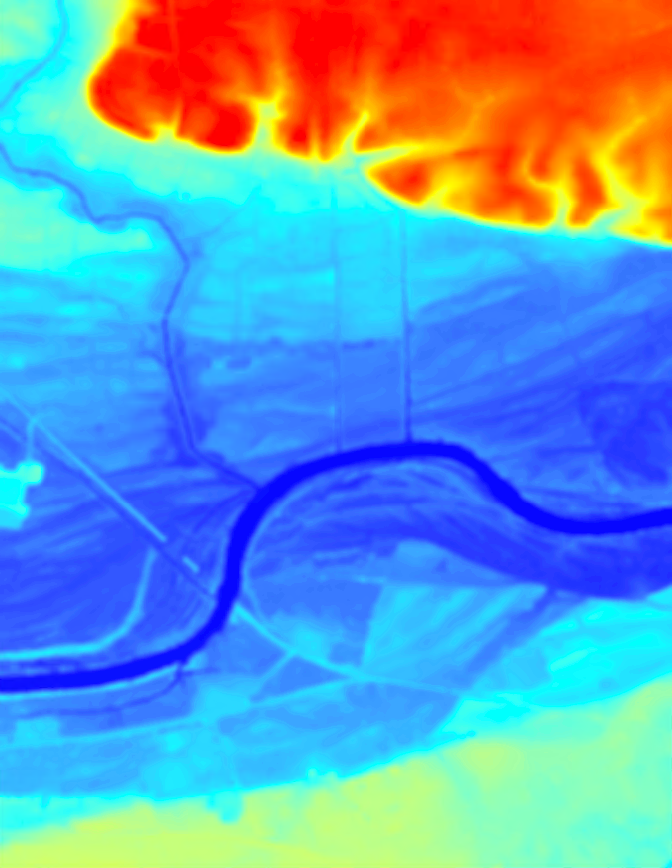}
}\\
\subfloat[Decision tree result]{%
      \includegraphics[width=1.5in]{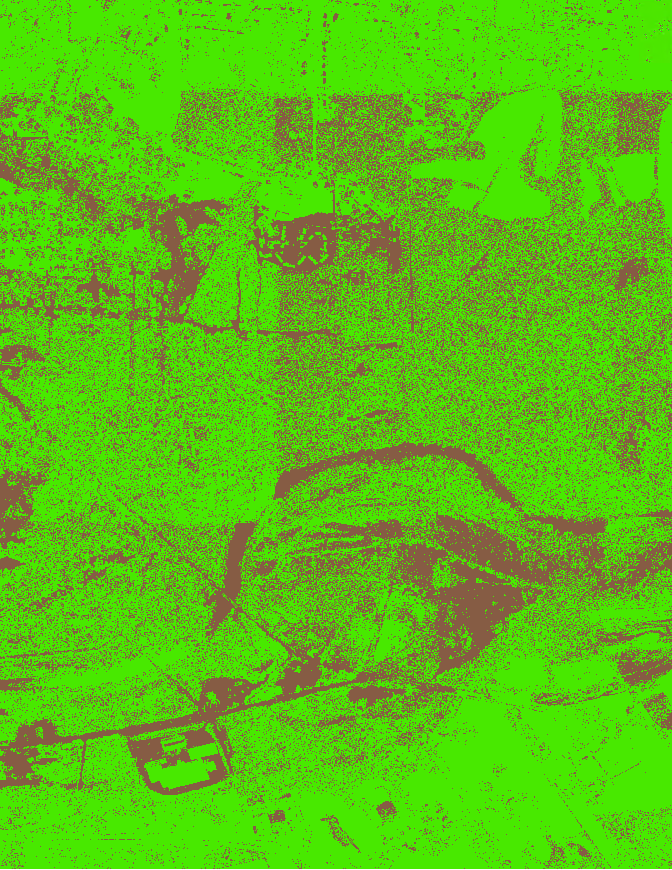}
}
\subfloat[HMT result]{%
      \includegraphics[width=1.5in]{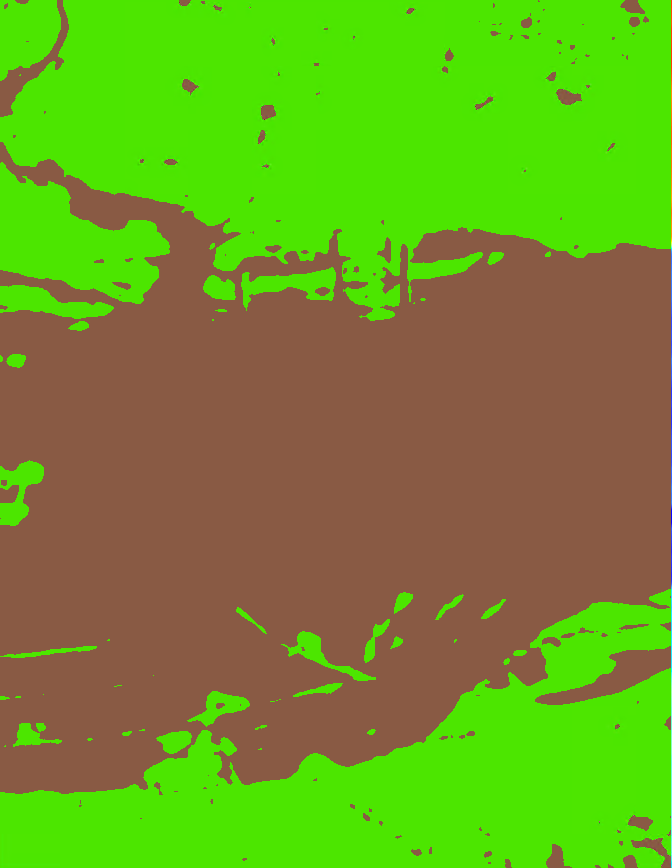}
}
\caption{Results on Mathew flood mapping (flood in brown, dry in green, best viewed in color)}
\label{fig:real1CaseStudy}
\end{figure}


{\bf Sensitivity of HMT to initial parameters:} We conducted sensitivity of our HMT model to different initial parameter values on prior class probability $\pi$ and class transitional probability $\rho$ (the parameters of $\{\boldsymbol{\mu}_c,\boldsymbol{\Sigma}_c|c=1,2\}$ were initialized based on maximum likelihood estimation on the training set). First, we fixed initial $\rho=0.99$ and varied initial $\pi$ from $0.1$ to $0.9$. Results of converged value of $\rho$ together with final F-score were shown in Figure~\ref{fig:realSensitivity}(a-b). It can be seen that our HMT model was quite stable with different initial $\pi$ values. Similarly, we fixed initial $\pi = 0.5$, and varied initial $\rho$ from $0.2$, $0.3$, to $0.99$. Results in 
Figure~\ref{fig:realSensitivity}(c-d) showed the same trend. In practice, we can select an initial $\pi$ value around $0.5$ and a relatively high initial $\rho$ value such as $0.9$ (because \emph{flood} pixels' neighbor pixel is very likely to be \emph{flood} due to spatial autocorrelation).

\begin{figure}[h]
\centering
\subfloat[]{%
      \includegraphics[width=1.3in, angle=270]{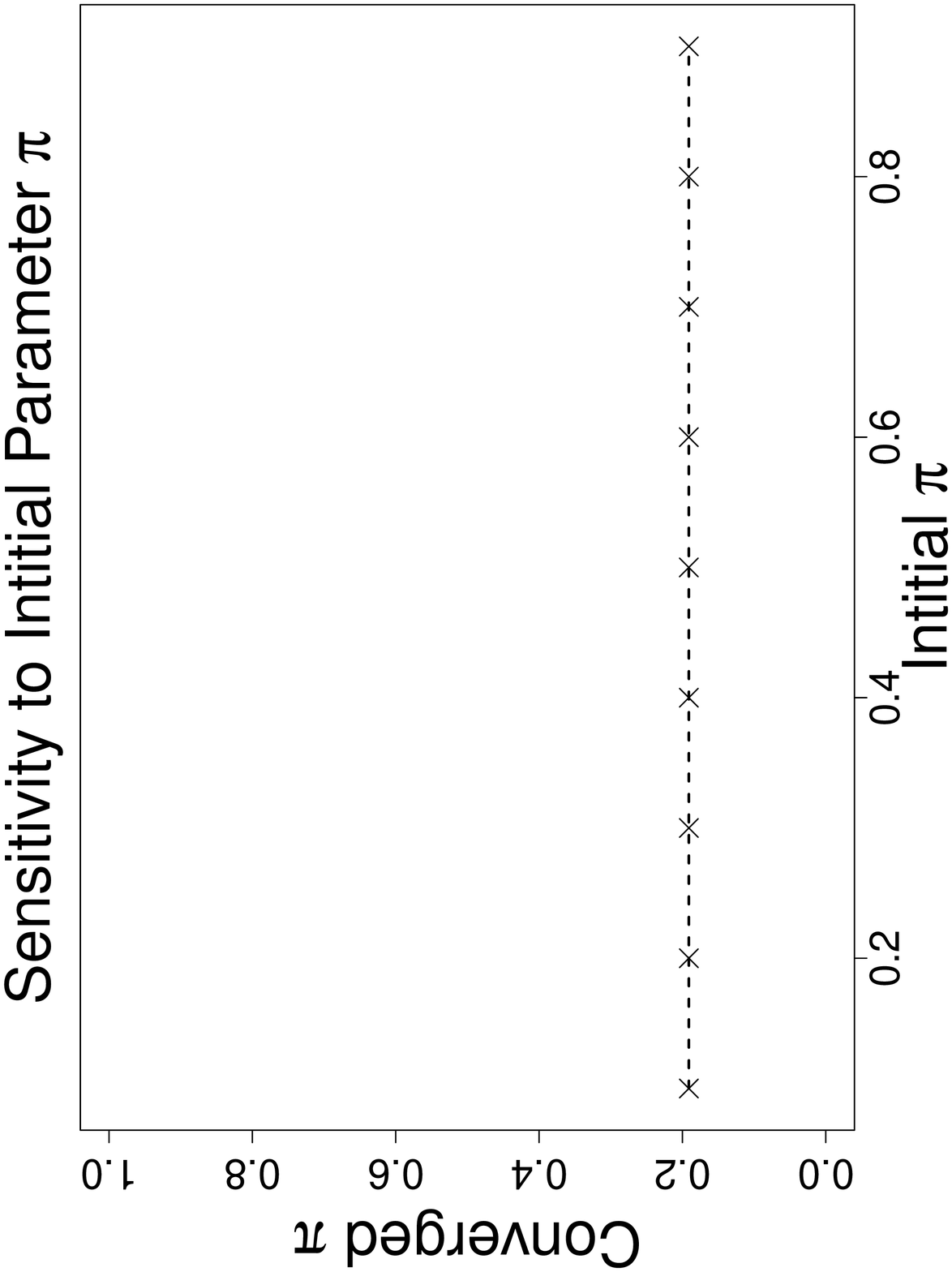}
}
\subfloat[]{%
      \includegraphics[width=1.3in, angle=270]{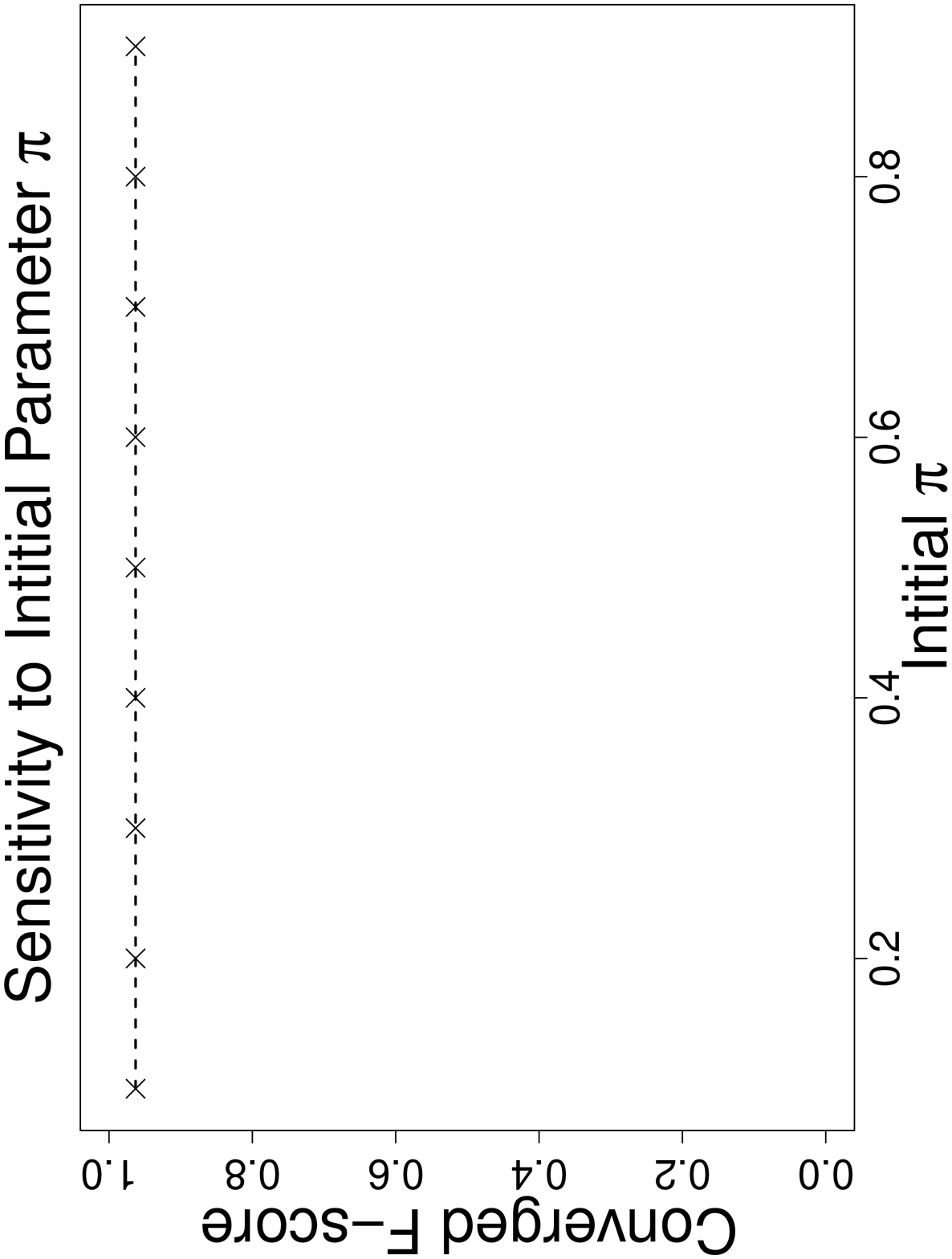}
}\\
\subfloat[]{%
      \includegraphics[width=1.3in, angle=270]{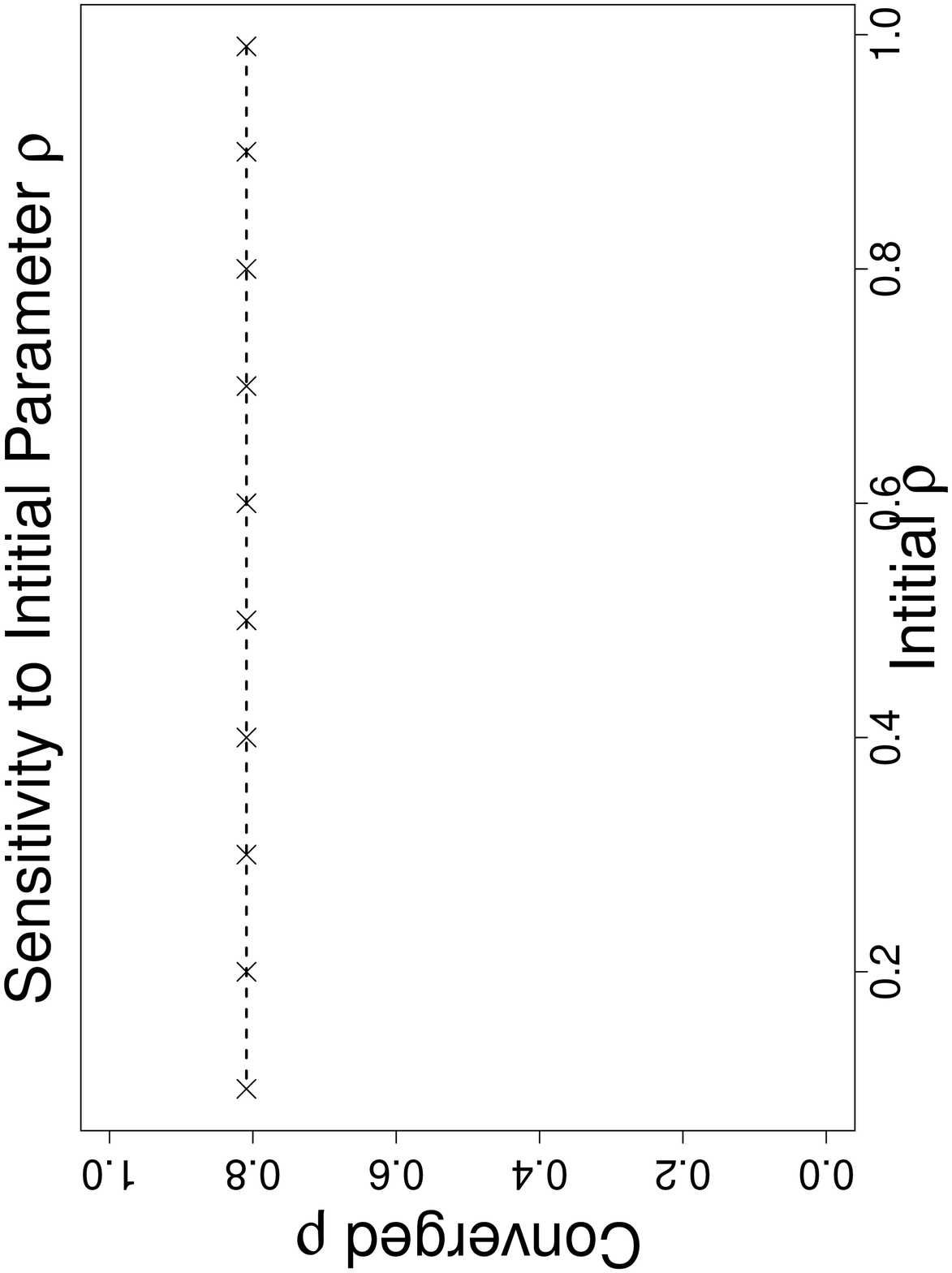}
}
\subfloat[]{%
      \includegraphics[width=1.3in, angle=270]{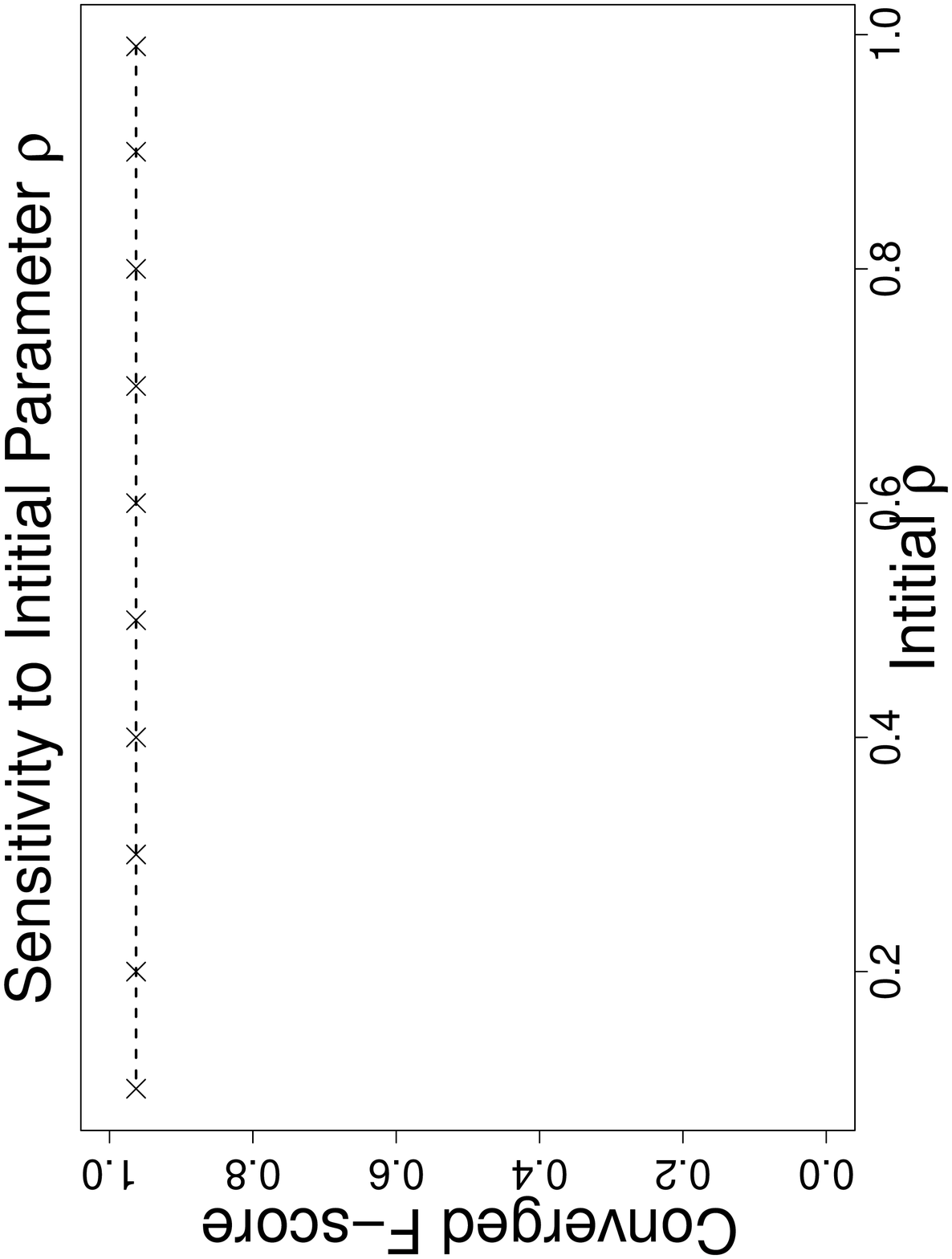}
}
\caption{Sensitivity of HMT to different initial parameters $\pi$ and $\rho$}
\label{fig:realSensitivity}
\end{figure}

{\bf Parameter iterations and convergence in HMT:} Here we fixed the initial $\pi=0.5$ and initial $\rho=0.99$, and measured the parameter iterations and convergence behavior. Our convergence threshold was set $0.001\%$. The parameter values of $\pi$ and $\rho$ at each iteration were summarized in Figure~\ref{fig:realIterations} (we omitted $\boldsymbol{\mu}_c,\boldsymbol{\Sigma}_c$ because there were too many variables). The parameters converged after 10 iterations. 

\begin{figure}[h]
\centering
\includegraphics[width=2.2in, angle=270]{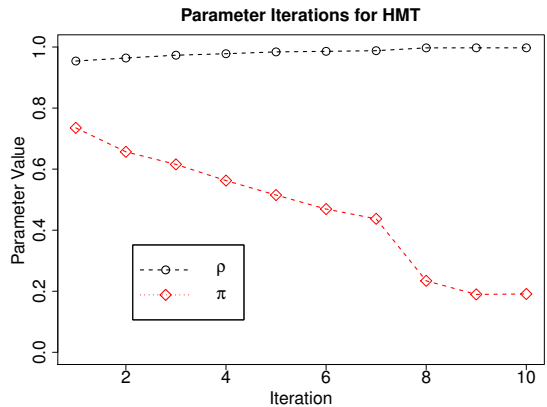}
\caption{Parameter iterations and convergence in HMT}
\label{fig:realIterations}
\end{figure}

\subsection{Hurricane Harvey Floods 2017 }
The high-resolution earth imagery we used were from Plant Labs. Inc. with red, green, and blue bands in 3 meter resolution, and the digital elevation data was from Texas natural resource management department. We manually collected a training set with 5000 flood samples and 5000 dry samples. We selected a test scene with 4174 rows and 4592 columns, within which we manually labeled 74305 flood samples, and 52658 dry samples as the test set.

We compared different methods on their precision, recall, and F-score on each class. Results were summarized in Table~\ref{tab:compReal2}. We can see that decision tree, random forest, and maximum likelihood classifier all perform poorly on raw features, with overall F-score less than 0.75. Adding additional elevation feature improved the classification accuracy slightly, but the overall F-score was still below 0.8. The reason may be that the absolute elevation values from training area are not as good an indicator for two classes as in the test area. Similarly, adding label propagation in post-processing and the MRF model barely improved classification performance compared with decision tree and random forest on raw features, because the errors were mostly systematic instead of salt-and-pepper noise. TSVM performed even worse than supervised methods without unlabeled samples, which was somehow surprising. In contrast to baseline methods, our hidden Markov tree achieved superior performance with around 0.95 F-score on both classes. Visualization of some results were shown in Figure~\ref{fig:real2CaseStudy}. We can see that our HMT model significantly outperformed decision tree on flood locations under tree canopies, similar to the results in Figure~\ref{fig:real1CaseStudy}.
\begin{table}
\centering
\caption{Comparison on Harvey Flood Data}
\begin{tabular}{cccccc}
\hline
Classifiers & Class & Precision &Recall & F & Avg. F\\ \hline
\multirow{2}{*}{DT+Raw}&Dry&{0.58}&{0.88}&{0.70}&\multirow{2}{*}{0.69}\\ 
 &Flood&{0.87}&{0.56}&{0.68}&\\ \hline
\multirow{2}{*}{RF+Raw}&Dry&{0.62}&{0.96}&{0.76}&\multirow{2}{*}{0.74}\\ 
 &Flood&{0.95}&{0.59}&{0.73}&\\ \hline
\multirow{2}{*}{GBM+Raw}&Dry&{0.56}&{0.80}&{0.66}&\multirow{2}{*}{0.66}\\ 
  &Flood&{0.80}&{0.56}&{0.66}&\\ \hline 
\multirow{2}{*}{MLC+Raw}&Dry&{0.63}&{0.93}&{0.75}&\multirow{2}{*}{0.74}\\ 
 &Flood&{0.93}&{0.61}&{0.73}&\\ \hline 
\multirow{2}{*}{DT+elev.}&Dry&{0.61}&{0.99}&{0.76}&\multirow{2}{*}{0.74}\\ 
 &Flood&{0.99}&{0.56}&{0.72}&\\ \hline 
\multirow{2}{*}{RF+elev.}&Dry&{0.66}&{0.99}&{0.79}&\multirow{2}{*}{0.79}\\ 
 &Flood&{0.99}&{0.65}&{0.78}&\\ \hline 
\multirow{2}{*}{MLC+elev.}&Dry&{0.65}&{0.97}&{0.78}&\multirow{2}{*}{0.77}\\ 
 &Flood&{0.97}&{0.62}&{0.76}&\\ \hline 
 \multirow{2}{*}{DT+LP}&Dry&{0.59}&{0.90}&{0.71}&\multirow{2}{*}{0.70}\\ 
  &Flood&{0.89}&{0.56}&{0.69}&\\ \hline 
 \multirow{2}{*}{RF+LP}&Dry&{0.63}&{0.97}&{0.76}&\multirow{2}{*}{0.75}\\ 
  &Flood&{0.97}&{0.59}&{0.74}&\\ \hline 
 \multirow{2}{*}{MLC+LP}&Dry&{0.63}&{0.94}&{0.76}&\multirow{2}{*}{0.75}\\ 
  &Flood&{0.93}&{0.61}&{0.74}&\\ \hline  
\multirow{2}{*}{MRF}&Dry&{0.63}&{0.94}&{0.75}&\multirow{2}{*}{0.74}\\ 
 &Flood&{0.94}&{0.61}&{0.74}&\\ \hline  
 \multirow{2}{*}{TSVM}&Dry&{0.55}&{0.67}&{0.60}&\multirow{2}{*}{0.63}\\ 
  &Flood&{0.72}&{0.61}&{0.66}&\\ \hline  
\multirow{2}{*}{HMT}&Dry&{0.91}&{0.98}&{0.94}&\multirow{2}{*}{0.95}\\ 
 &Flood&{0.98}&{0.93}&{0.95}&\\ \hline  
\end{tabular}
\label{tab:compReal2}
\end{table}

\begin{figure}[h]
\centering
\subfloat[High-resolution satellite imagery in Houston, TX]{%
      \includegraphics[width=1.5in]{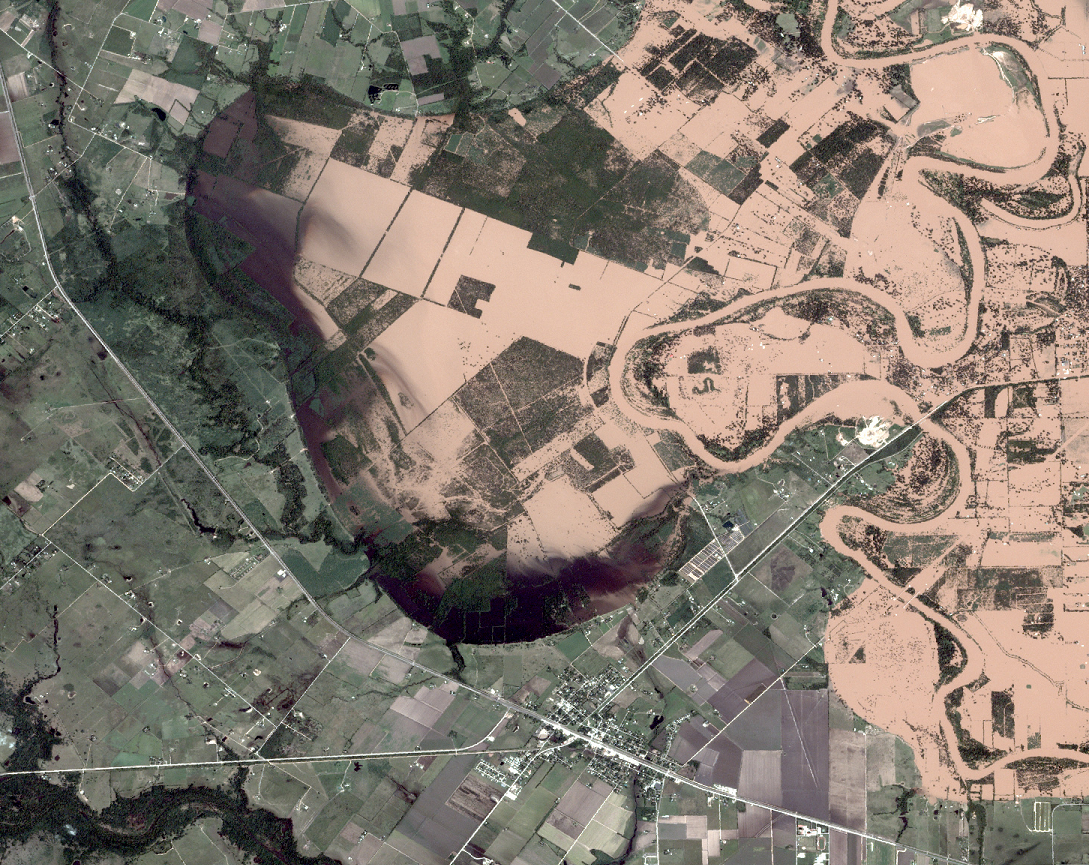}
}
\subfloat[Digital elevation]{%
      \includegraphics[width=1.5in]{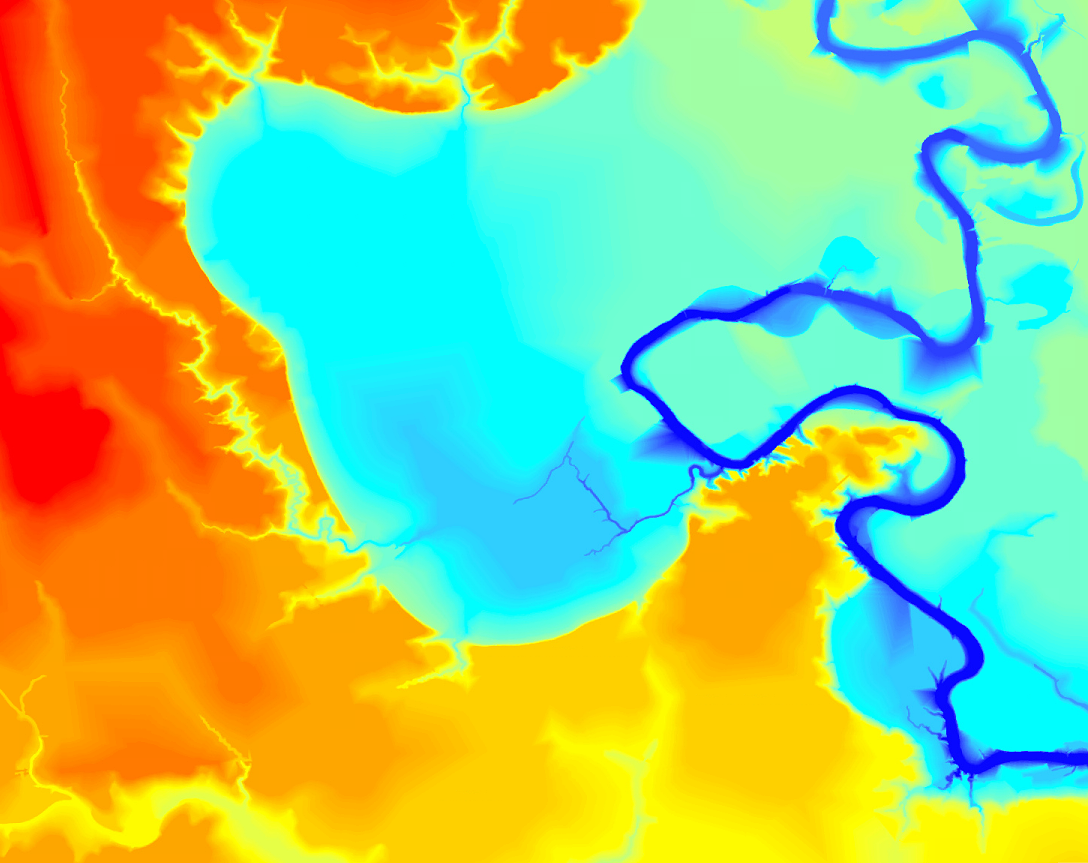}
}\\
\subfloat[Decision tree result]{%
      \includegraphics[width=1.5in]{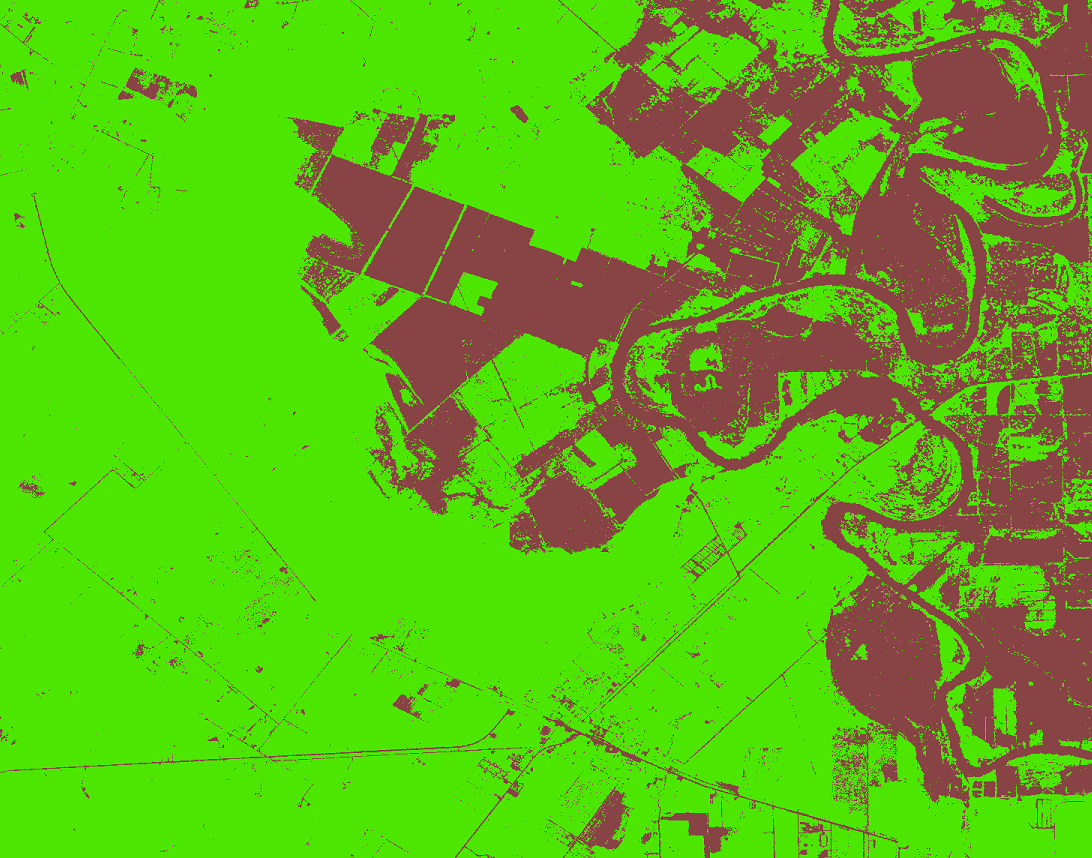}
}
\subfloat[HMT result]{%
      \includegraphics[width=1.5in]{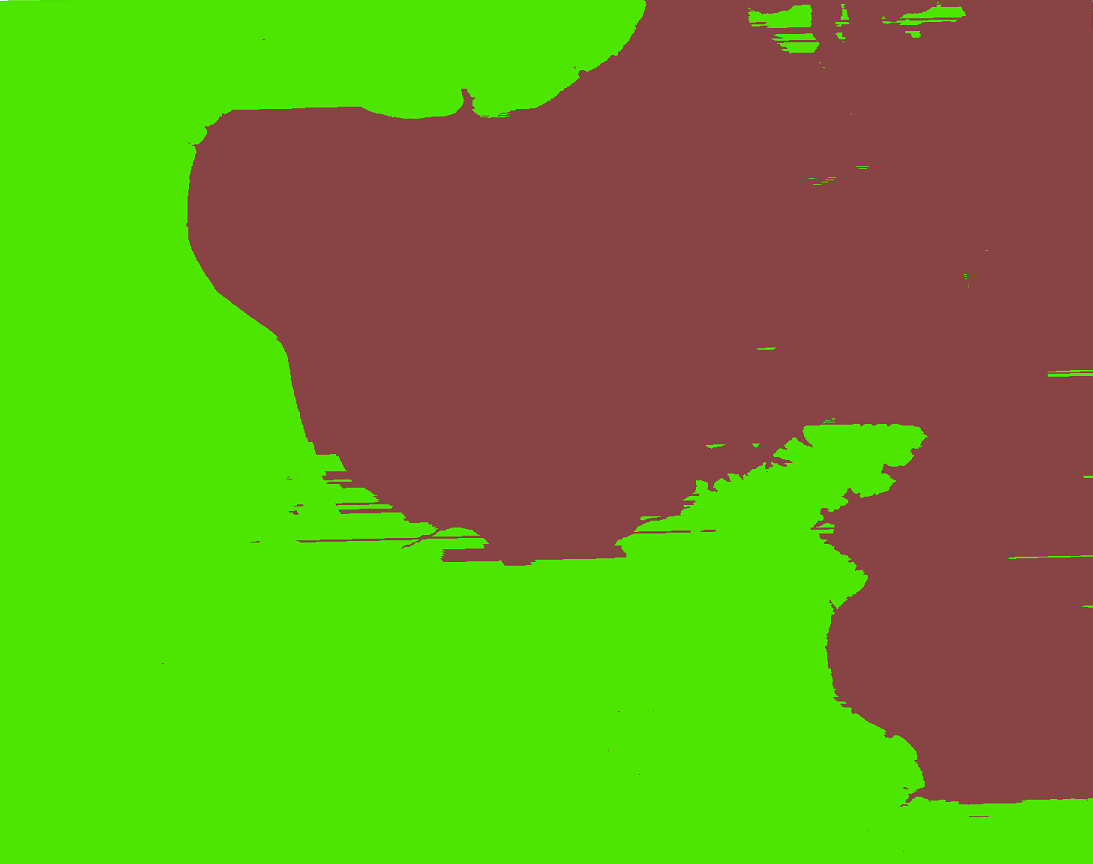}
}
\caption{Results on Harvey flood mapping (flood in brown, dry in green, best viewed in color)}
\label{fig:real2CaseStudy}
\end{figure}
\section{Related Work}

Over the years, various techniques have been developed to incorporate spatial properties into classification algorithms for earth imagery data. Many methods are based on preprocessing and post-processing, including neighborhood window filters~\cite{chan2005salt,esakkirajan2011removal}, spatial contextual variables and textures~\cite{puissant2005utility}, spatial autocorrelation statistics~\cite{jiang2015focal}, morphological profiling~\cite{benediktsson2005classification}, spatial-spectral classifiers~\cite{tarabalka2009spectral,wang2014semi} and object-based image analysis~\cite{hay2008geographic}. Markov random field model explicitly captures spatial dependency, but the dependency is undirected~\cite{li2009markov}. 
\cite{KhandelwalMK15} proposes a spatial classification model that captures directed spatial dependency on classes but assumes dependency to follow a total order. Deep learning methods have recently been applied to earth imagery classification~\cite{zhang2016deep} such as land cover mapping~\cite{jia2017incremental}, target recognition and scene identification. To the best of our knowledge, none of these existing works focus on incorporating anisotropic spatial dependency in partial order constraints, which is important in hydrological applications such as flood mapping.


Hidden Markov models have been extensively studied in the signal processing literature~\cite{rabiner1989tutorial}. Learning and inference of hidden Markov models are often based on EM algorithms and message propagation (sum-and-product algorithm)~\cite{kschischang2001factor}. \cite{ronen1995parameter} proposes a message propagation algorithm called ``upward-downward" on a dependence tree structure. \cite{crouse1998wavelet} proposes a wavelet-domain hidden Markov tree to model dependency in the two-dimensional time-frequency plane. The model is used to characterize properties of wavelet coefficients in signal processing such as clustering, persistence, and compression, which is dramatically different from our HMT model which captures spatial dependency in the geographic space based on topography.

\section{Conclusions and Future Works}
In this paper, we propose hidden Markov tree (HMT), an anisotropic spatial classification model for flood mapping on earth imagery. Compared with existing methods, our HMT model explicitly captures directed spatial dependency with partial order constraint. Partial order constraint is reflected in a reverse tree structure in the hidden class layer. We also proposed algorithms for reverse tree construction, parameter learning and class inference. Evaluations on both synthetic data and real world data shows that our HMT model is scalable to large data sizes, and can utilize the partial order spatial dependency to reduce classification errors due to class confusion. 

In future works, we plan to extend our HMT model for spatially non-stationary data. In this case, we need to generalize the tree structure in hidden class layer into poly-tree with each sub-tree for a spatial zone. 



\bibliographystyle{abbrv}
\bibliography{TKDE_HMT}

\appendix
\section{Proof of Theorems}
\setcounter{theorem}{0}
\begin{theorem}
The unnormalized marginal posterior distribution of the class of a leaf node, as well as the classes of a non-leaf node with parents can be computed by (9) and (10) respectively. Their normalized margin posterior distributions can be computed by (11) and (12) respectively.
\begin{equation}\label{eq:unnormalizedMarginY}
P^\prime(y_n|\mathbf{X},\boldsymbol{\Theta_0}) = f_n^i(y_n) g_n^i(y_n) P(\mathbf{x}_n|y_n)    
\end{equation}        
\begin{equation}\label{eq:unnormalizedMarginYYp}
P^\prime(y_n, y_{k\in \mathcal{P}_n}|\mathbf{X},\boldsymbol{\Theta_0})= \prod\limits_{k\in \mathcal{P}_n}f_k^o(y_k) g_n^o(y_n) P(y_n | y_{k\in \mathcal{P}_n})    
\end{equation}
\begin{equation}\label{eq:normalizedMarginY}
P(y_n|\mathbf{X},\boldsymbol{\Theta_0})\leftarrow \frac{P^\prime(y_n|\mathbf{X},\boldsymbol{\Theta_0})}{\sum\limits_{y_n}P^\prime(y_n|\mathbf{X},\boldsymbol{\Theta_0})}    
\end{equation}
\begin{equation}\label{eq:normalizedMarginYYp}
P(y_n, y_{k\in \mathcal{P}_n}|\mathbf{X},\boldsymbol{\Theta_0})\leftarrow\frac{P^\prime(y_n, y_{k\in \mathcal{P}_n}|\mathbf{X},\boldsymbol{\Theta_0})}{\sum\limits_{y_n,y_{k\in \mathcal{P}_n}}P^\prime(y_n, y_{k\in \mathcal{P}_n}|\mathbf{X},\boldsymbol{\Theta_0})}    
\end{equation}
\end{theorem}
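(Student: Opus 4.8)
The plan is to recognize these four formulas as an instance of the sum--product (belief propagation) algorithm on the tree-structured factor graph induced by the factorization in Equation~\eqref{eq:joint}, and to establish them by a two-pass induction over the reverse tree. The unnormalized quantity $P^\prime(y_n|\mathbf{X},\boldsymbol{\Theta_0})$ is just the joint $P(y_n,\mathbf{X}|\boldsymbol{\Theta_0})=\sum_{\mathbf{Y}\setminus\{y_n\}}P(\mathbf{X},\mathbf{Y}|\boldsymbol{\Theta_0})$, and since its normalizer $\sum_{y_n}P(y_n,\mathbf{X})=P(\mathbf{X})$ does not depend on $y_n$, the normalized formulas~\eqref{eq:normalizedMarginY} and~\eqref{eq:normalizedMarginYYp} follow immediately from the definition of conditional probability once~\eqref{eq:unnormalizedMarginY} and~\eqref{eq:unnormalizedMarginYYp} are proved. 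So the real content is to show that the forward and backward messages correctly partition the marginalization sum.

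First I would introduce, for each node $n$, the upper set $U_n$ consisting of $n$ together with all ancestors reachable through the parent relation (the leaf-side portion of the tree), and the complementary lower set $D_n$ of all remaining nodes. The key structural facts, which I would read off from the reverse-tree definition, are that the parents' upper sets are pairwise disjoint with $U_n=\{n\}\cup\bigcup_{k\in\mathcal{P}_n}U_k$, and dually that $D_n=\{c_n\}\cup\big(\bigcup_{s\in\mathcal{S}_n}U_s\big)\cup D_{c_n}$ is a disjoint partition. These decompositions are exactly what make the global sum factor along the recursions.

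Next I would prove by induction from leaves to root that $f_n^o(y_n)=\sum_{y_{U_n\setminus\{n\}}}\prod_{m\in U_n}P(\mathbf{x}_m|y_m)P(y_m|y_{\mathcal{P}_m})$, i.e.\ the forward message equals the joint probability of all evidence in $U_n$ marginalized over every hidden class in $U_n$ except $y_n$. The base case is a leaf, where $f_n^o(y_n)=P(y_n)P(\mathbf{x}_n|y_n)$ directly; the inductive step substitutes the definitions~\eqref{eq:hmtforwardin}--\eqref{eq:hmtforwardout} and uses disjointness of the parent upper sets so that $\prod_{k\in\mathcal{P}_n}f_k^o(y_k)$ supplies precisely the factors of $\bigcup_k U_k$. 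Dually, the goal for the backward pass is to show $g_n^i(y_n)=\sum_{y_{D_n}}\prod_{m\in D_n}P(\mathbf{x}_m|y_m)P(y_m|y_{\mathcal{P}_m})$, where the product contains the boundary transition $P(y_{c_n}|y_n,y_{\mathcal{S}_n})$ that couples $D_n$ to the fixed value $y_n$.

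The main obstacle is this backward induction. Unlike an ordinary chain or a single-parent tree, here the child $c_n$ has $n$ together with the siblings $\mathcal{S}_n$ as parents, so the factor $P(y_{c_n}|y_n,y_{\mathcal{S}_n})$ entangles $y_n$ with the sibling subtrees and the sum over $D_n$ does not factor term-by-term. In the inductive step I would marginalize in a deliberate order: first sum out the interior of each sibling subtree $U_s$ to obtain $f_s^o(y_s)$, then sum out $D_{c_n}$ to obtain $g_{c_n}^i(y_{c_n})$ by the induction hypothesis (the child $c_n$ is processed earlier in the root-to-leaf order), and finally sum over $y_{c_n}$ and the sibling classes $y_{\mathcal{S}_n}$ against the coupling transition; collecting $g_{c_n}^o(y_{c_n})=g_{c_n}^i(y_{c_n})P(\mathbf{x}_{c_n}|y_{c_n})$ reproduces exactly the recursion~\eqref{eq:hmtbackwardin}. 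Once both characterizations hold, I would conclude~\eqref{eq:unnormalizedMarginY} by splitting $\mathbf{Y}\setminus\{y_n\}$ along $U_n\setminus\{n\}$ and $D_n$, the first sum giving $f_n^o(y_n)=f_n^i(y_n)P(\mathbf{x}_n|y_n)$ and the second giving $g_n^i(y_n)$. For~\eqref{eq:unnormalizedMarginYYp} I would instead hold $y_n$ and all parents $y_{\mathcal{P}_n}$ fixed: the parent subtrees contribute $\prod_{k\in\mathcal{P}_n}f_k^o(y_k)$, node $n$ contributes $P(\mathbf{x}_n|y_n)P(y_n|y_{\mathcal{P}_n})$, and the lower set contributes $g_n^i(y_n)$, which regroup into $\prod_{k\in\mathcal{P}_n}f_k^o(y_k)\,g_n^o(y_n)\,P(y_n|y_{\mathcal{P}_n})$.
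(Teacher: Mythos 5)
Your proposal is correct, and it shares the paper's overall skeleton: a leaves-to-root induction for the forward messages, a root-to-leaves induction for the backward messages, and a final assembly in which the global marginalization sum splits across the two passes, with normalization then immediate because the normalizer is $P(\mathbf{X})$. The difference is in how the key lemma is formulated and discharged. The paper assigns probabilistic meanings to the messages, namely $f_n^o(y_n)=P(\mathbf{x}_{tree(n)},y_n)$ and $g_n^i(y_n)=P(\mathbf{x}_{pre(n)}\,|\,y_n)$, where $tree(n)$ and $pre(n)$ are exactly your $U_n$ and $D_n$, and its inductive steps manipulate conditional probabilities; in doing so it repeatedly invokes conditional-independence identities (e.g., $P(\mathbf{x}_{tree(k)}\,|\,y_k)=P(\mathbf{x}_{tree(k)}\,|\,y_n,y_{k\in\mathcal{S}_n})$, and the marginal independence of distinct parents' subtree observations) that do hold by d-separation in the reverse-tree Bayesian network but are used without explicit justification. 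Your formulation instead characterizes each message as an explicit partial sum of factor products taken from the factorized joint $P(\mathbf{X},\mathbf{Y})$, so the inductive steps need only the set-theoretic disjointness facts $U_n=\{n\}\cup\bigcup_{k\in\mathcal{P}_n}U_k$ and $D_n=\{c_n\}\cup\bigl(\bigcup_{s\in\mathcal{S}_n}U_s\bigr)\cup D_{c_n}$ together with distributing sums over products with disjoint index sets; in particular the delicate backward step, where the factor $P(y_{c_n}\,|\,y_n,y_{\mathcal{S}_n})$ couples $y_n$ to the sibling subtrees, is handled purely by choosing the order of summation rather than by asserting an independence property. What the paper's route buys is interpretability: each message is a named distribution, which makes the final identities $P^\prime(y_n|\mathbf{X},\boldsymbol{\Theta_0})=P(\mathbf{X},y_n)$ and $P^\prime(y_n,y_{k\in\mathcal{P}_n}|\mathbf{X},\boldsymbol{\Theta_0})=P(\mathbf{X},y_n,y_{k\in\mathcal{P}_n})$ transparent. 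What your route buys is self-containedness and rigor, since every step is an algebraic identity on finite sums; it is arguably the cleaner rendering of the same sum--product fact.
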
    

\begin{proof}
Detailed proof can be found below. Some symbols are defined in Table~\ref{tab:symbols}.
\begin{table}[h]
    \caption{List of symbols in the proof}
    \label{tab:symbols}
    \centering
    \begin{tabular}{|c|p{2.5in}|}\hline
    Symbol  &  Description\\
      $subtree(n)$   &All nodes in the subtree rooted at node $n$ except for the root node $n$  \\ \hline
      $tree(n)$   & All nodes in the subtree rooted at node $n$ \\ \hline
      $pre(n)$   &All nodes in the reverse tree excluding $tree(n)$ \\ \hline
      $passed(n)$   &All nodes in the reverse tree excluding $subtree(n)$ \\ \hline
    \end{tabular}
\end{table}

Forward message propagation, the statistical meanings of the message are as below
\begin{align}
    f_n^i(y_n) &= P(\mathbf{x}_{subtree(n)}, y_n)\label{eq:fi}\\
    f_n^o(y_n) &= P(\mathbf{x}_{tree(n)}, y_n)\label{eq:fo}
\end{align}
Here, $subtree(n)$ means all nodes in the subtree rooted at node $n$ except for the root node $n$ (i.e., all nodes visited before $n$ in its subtrees during bottom-up propagation), $tree(n)$ means all nodes in the subtree rooted at node $n$ (i.e., $tree(n)=\{n\}\cup subtree(n)$).

Base case: If $n$ is leaf node, 
\begin{align}
    f_n^i(y_n) &= P(y_n)\\
    \begin{split}
            f_n^o(y_n) &= f_n^i(y_n)P(\mathbf{x}_n|y_n)\\
                        &= P(\mathbf{x}_n, y_n)
    \end{split}
\end{align}

Thus, the statement Equation \ref{eq:fi} and \ref{eq:fo} are correct.

Induction step: Let $y_k$ be an internal node and and suppose Equation \ref{eq:fi} and \ref{eq:fo} is true for $n=k$. Then, based on topological order, let $y_n$ be the child of node $y_k$.
\begin{equation}\small
    \begin{split}
    f_n^i(y_n) &=\sum\limits_{y_{k\in \mathcal{P}_n}}P(y_n|y_{k\in \mathcal{P}_n}) \prod\limits_{k\in \mathcal{P}_n}f_k^o(y_k)\\
    &=\sum\limits_{y_{k\in \mathcal{P}_n}} P(y_n|y_{k\in \mathcal{P}_n}) \prod\limits_{k\in \mathcal{P}_n} P(\mathbf{x}_{tree(k)}, y_k)\\
    &=\sum\limits_{y_{k\in \mathcal{P}_n}}P(y_n|y_{k\in \mathcal{P}_n}) \prod\limits_{k\in \mathcal{P}_n}P(y_k|\mathbf{x}_{tree(k)}) P(\mathbf{x}_{tree(k)})\\
    &=\sum\limits_{y_{k\in \mathcal{P}_n}} P(y_n, y_{k\in \mathcal{P}_n} | \{\mathbf{x}_{tree(k)}, k\in \mathcal{P}_n\}) \prod\limits_{k\in \mathcal{P}_n} P(\mathbf{x}_{tree(k)})\\
    &=\sum\limits_{y_{k\in \mathcal{P}_n}} P(y_n, y_{k\in \mathcal{P}_n} , \{\mathbf{x}_{tree(k)}, k\in \mathcal{P}_n\})\\
    &= P(y_n, \{\mathbf{x}_{tree(k)}, k\in \mathcal{P}_n\})\\
    &= P(y_n, \mathbf{x}_{subtree(n)})
    \end{split}
\end{equation}

\begin{equation}\small
    \begin{split}
     f_n^o(y_n) &= f_n^i(y_n)P(\mathbf{x}_n|y_n)\\
     &=P(y_n, \mathbf{x}_{subtree(n)}) P(\mathbf{x}_n|y_n)\\
     &=P(\mathbf{x}_{subtree(n)} | y_n) P(y_n) P(\mathbf{x}_n|y_n)\\
     &=P(\mathbf{x}_{tree(n)} | y_n) P(y_n)\\
     &=P(\mathbf{x}_{tree(n)} , y_n)\\
     \end{split}
\end{equation}

Thus, Equation \ref{eq:fi} and \ref{eq:fo} hold for internal nodes. Hence Equation \ref{eq:fi} and \ref{eq:fo} are correct for all nodes.

Backward message propagation, the statistical meanings of the message are as below
\begin{align}
    g_n^i(y_n) &= P(\mathbf{x}_{pre(n)} | y_n)\label{eq:gi}\\
    g_n^o(y_n) &= P(\mathbf{x}_{passed(n)} | y_n)\label{eq:go}
\end{align}
Here, $pre(n)$ means all nodes visited before $n$ during top-down propagation in the reverse tree (i.e., $pre(n)$ is all nodes in the reverse tree excluding $tree(n)$), $passed(n)$ means all nodes visited including $n$ during top-down propagation in the reverse tree (i.e., $passed(n)$ includes all nodes in the reverse tree excluding $subtree(n)$).

Base case: If $n$ is root node, 
\begin{align}
    g_n^i(y_n) &= 1\\
    \begin{split}
            g_n^o(y_n) &= g_n^i(y_n)P(\mathbf{x}_n|y_n)\\
                        &= P(\mathbf{x}_n | y_n)
    \end{split}
\end{align}

Thus, the statement Equation \ref{eq:gi} and \ref{eq:go} are correct.

Induction step: Let $y_k$ be an internal node and and suppose Equation \ref{eq:gi} and \ref{eq:go} is true for $n=k$. Then, based on topological order, let $y_n$ be one of the parents of node $y_k$.
\begin{equation}\small
    \begin{split}
    g_n^i(y_n) &=\sum\limits_{y_{c_n},y_{k\in \mathcal{S}_n}} g_{c_n}^o P(y_{c_n}|y_n,y_{k\in \mathcal{S}_n}) \prod\limits_{k\in \mathcal{S}_n}f_k^o(y_k)\\
    &=\sum\limits_{y_{c_n},y_{k\in \mathcal{S}_n}} P(\mathbf{x}_{passed(c_n)} | y_{c_n}) P(y_{c_n}|y_n,y_{k\in \mathcal{S}_n}) \prod\limits_{k\in \mathcal{S}_n} P(\mathbf{x}_{tree(k)} , y_k)\\
    &=\sum\limits_{y_{k\in \mathcal{S}_n}} P(\mathbf{x}_{passed(c_n)} | y_n,y_{k\in \mathcal{S}_n}) \prod\limits_{k\in \mathcal{S}_n} P(\mathbf{x}_{tree(k)} | y_k) P(y_k)\\
    &=\sum\limits_{y_{k\in \mathcal{S}_n}} P(\mathbf{x}_{passed(c_n)} | y_n,y_{k\in \mathcal{S}_n}) \prod\limits_{k\in \mathcal{S}_n} P(\mathbf{x}_{tree(k)} | y_n,y_{k\in \mathcal{S}_n}) P(y_k)\\
    &=\sum\limits_{y_{k\in \mathcal{S}_n}} P(\mathbf{x}_{passed(c_n)}, \{ \mathbf{x}_{tree(k)}, k\in \mathcal{S}_n\} | y_n,y_{k\in \mathcal{S}_n} ) \prod\limits_{k\in \mathcal{S}_n} P(y_k)\\
    &=\sum\limits_{y_{k\in \mathcal{S}_n}} P(\mathbf{x}_{pre(n)} | y_n,y_{k\in \mathcal{S}_n} ) \prod\limits_{k\in \mathcal{S}_n} P(y_k) P(y_n) / P(y_n) \\
    &=\sum\limits_{y_{k\in \mathcal{S}_n}} P(\mathbf{x}_{pre(n)} , y_n,y_{k\in \mathcal{S}_n} )  / P(y_n)\\
    &=P(\mathbf{x}_{pre(n)} , y_n)  / P(y_n)\\
    &=P(\mathbf{x}_{pre(n)} | y_n)
    \end{split}
\end{equation}

\begin{equation}
    \begin{split}
    g_n^o(y_n) &= g_n^i(y_n)P(\mathbf{x}_n|y_n)\\
    &=P(\mathbf{x}_{pre(n)} | y_n)P(\mathbf{x}_n|y_n)\\
    &=P(\mathbf{x}_{passed(n)} | y_n)
    \end{split}
\end{equation}

Thus, Equation \ref{eq:gi} and \ref{eq:go} hold for internal nodes. Hence Equation \ref{eq:gi} and \ref{eq:go} are correct for all nodes.

For unnormalized marginal posterior distribution, Equation \ref{eq:unnormalizedMarginY} and \ref{eq:normalizedMarginYYp} are proved as below. For simplicity, we omit $\boldsymbol{\Theta_0}$ on the right side of the following equations.

\begin{equation}\small
\begin{split}
    P^\prime(y_n|\mathbf{X},\boldsymbol{\Theta_0}) &= f_n^i(y_n) g_n^i(y_n) P(\mathbf{x}_n|y_n) \\
    &= P(\mathbf{x}_{subtree(n)}, y_n) P(\mathbf{x}_{pre(n)} | y_n)P(\mathbf{x}_n|y_n)\\
    &= P(\mathbf{x}_{subtree(n)}| y_n) P(y_n) P(\mathbf{x}_{pre(n)} | y_n)P(\mathbf{x}_n|y_n)\\
    &= P(\mathbf{X}| y_n) P(y_n)\\
    &= P(\mathbf{X}, y_n)
\end{split}
\end{equation}

\begin{equation}
\begin{split}
P(y_n|\mathbf{X},\boldsymbol{\Theta_0}) &= \frac{P^\prime(y_n|\mathbf{X},\boldsymbol{\Theta_0})}{\sum\limits_{y_n}P^\prime(y_n|\mathbf{X},\boldsymbol{\Theta_0})}  \\
&= \frac{P(\mathbf{X}, y_n)}{P(\mathbf{X})}\\
&= P( y_n | \mathbf{X})
\end{split}
\end{equation}

\begin{equation}\small
\begin{split}
&P^\prime(y_n, y_{k\in \mathcal{P}_n}|\mathbf{X},\boldsymbol{\Theta_0})\\ 
&= \prod\limits_{k\in \mathcal{P}_n}f_k^o(y_k) g_n^o(y_n) P(y_n | y_{k\in \mathcal{P}_n})\\
&= \prod\limits_{k\in \mathcal{P}_n} P(\mathbf{x}_{tree(k)} , y_k) P(\mathbf{x}_{passed(n)} | y_n) P(y_n | y_{k\in \mathcal{P}_n})\\
&= P(\mathbf{x}_{subtree(n)} , y_{k\in \mathcal{P}_n}) P(\mathbf{x}_{passed(n)} | y_n) P(y_n | y_{k\in \mathcal{P}_n})\\
&= P(\mathbf{x}_{subtree(n)} | y_{k\in \mathcal{P}_n}) P(y_{k\in \mathcal{P}_n}) P(\mathbf{x}_{passed(n)}, y_n | y_{k\in \mathcal{P}_n})\\
&= P(\mathbf{X}, y_n | y_{k\in \mathcal{P}_n}) P(y_{k\in \mathcal{P}_n})\\
&= P(\mathbf{X}, y_n , y_{k\in \mathcal{P}_n})
\end{split}
\end{equation}

\begin{equation}\small
\begin{split}
P(y_n, y_{k\in \mathcal{P}_n}|\mathbf{X},\boldsymbol{\Theta_0})
&=\frac{P^\prime(y_n, y_{k\in \mathcal{P}_n}|\mathbf{X},\boldsymbol{\Theta_0})}{\sum\limits_{y_n,y_{k\in \mathcal{P}_n}}P^\prime(y_n, y_{k\in \mathcal{P}_n}|\mathbf{X},\boldsymbol{\Theta_0})}    \\
&=\frac{P(\mathbf{X}, y_n , y_{k\in \mathcal{P}_n})}{P(\mathbf{X})} \\
&= P(y_n , y_{k\in \mathcal{P}_n} | \mathbf{X})
\end{split}
\end{equation}

\end{proof}



%




\end{document}